\newtheorem{condition}{Condition}
\newcommand{\narrow}[2]{{\mathcal{N}}^{#1}_{#2}}
\newcommand{\lr}[1]{\text{LR}_{#1}}
\newcommand{\BlackBox}{\rule{1.5ex}{1.5ex}}  
\newenvironment{proof}{\par\noindent{\bf Proof\ }}{\hfill\BlackBox\\[2mm]}
\newtheorem{theorem}{Theorem}
\newtheorem{lemma}[theorem]{Lemma} 
\newtheorem{proposition}[theorem]{Proposition} 
\newtheorem{remark}[theorem]{Remark}
\newtheorem{corollary}[theorem]{Corollary}
\newtheorem{definition}[theorem]{Definition}
\begin{document}

\begin{frontmatter}



\title{Minimum Width for Deep, Narrow MLP: A Diffeomorphism Approach}


\author{Geonho Hwang}

\affiliation{organization={Center for AI and Natural Sciences, Korea Institute for Advanced Study},
            addressline={85, Hoegi-ro}, 
            city={Dongdaemun-gu},
            postcode={02455}, 
            state={Seoul},
            country={Republic of Korea}}

\begin{abstract}
Recently, there has been a growing focus on determining the minimum width requirements for achieving the universal approximation property in deep, narrow Multi-Layer Perceptrons (MLPs).
Among these challenges, one particularly challenging task is approximating a continuous function under the uniform norm, as indicated by the significant disparity between its lower and upper bounds.
To address this problem, we propose a framework that simplifies finding the minimum width for deep, narrow MLPs into determining a purely geometrical function denoted as $w(d_x, d_y)$. 
This function relies solely on the input and output dimensions, represented as $d_x$ and $d_y$, respectively.
Two key steps support this framework.
First, we demonstrate that deep, narrow MLPs, when provided with a small additional width, can approximate a $C^2$-diffeomorphism.
Subsequently, using this result, we prove that $w(d_x, d_y)$ equates to the optimal minimum width required for deep, narrow MLPs to achieve universality.
By employing the aforementioned framework and the Whitney embedding theorem, we provide an upper bound for the minimum width, given by $\operatorname{max}(2d_x+1, d_y) + \alpha(\sigma)$, where $0 \leq \alpha(\sigma) \leq 2$ represents a constant depending on the activation function.
Furthermore, we provide a lower bound of $4$ for the minimum width in cases where the input and output dimensions are both equal to two.

\end{abstract}



\begin{keyword}
Universal Approximation Theorem \sep Deep Narrow Network \sep Multilayer Perceptron \sep Invertible Neural Network \sep Whitney Embedding Theorem



\end{keyword}

\end{frontmatter}



\section{Introduction}
The \textit{universal approximation property} (UAP) refers to the capability of neural networks to approximate a wide range of functions.
As this property forms the foundation for the efficacy of neural networks, it has garnered significant interest within the research community.

Initial research focused mainly on two-layered Multilayer Perceptrons (MLPs). 
\citet{cybenko1989approximation} demonstrated that two-layered MLPs with sigmoidal activation functions possess the UAP for approximating continuous functions.
Later, \citet{leshno1993multilayer} expanded the scope of activation functions to more general ones.
In addition to two-layered MLPs, extensive investigation has been conducted into the UAP of \textit{deep, narrow MLPs}.
These MLPs have a constrained width and an arbitrary number of layers. 
Given the common use of MLPs with relatively modest widths and more than two layers in practical scenarios, the UAP of deep, narrow MLPs has attracted significant interest.

In this regard, a series of studies have been undertaken to determine the \textit{minimum width}, which is the necessary and sufficient width for the UAP. 
The minimum width depends on factors such as the input dimension $d_x$, the output dimension $d_y$, the activation function, and the type of norm employed.
For instance, \citet{lu2017expressive} demonstrated that deep, narrow MLPs with ReLU activation functions possess the UAP, leading to further research that narrowed down the minimum width range.
\citet{hanin2017approximating} extended the study to encompass arbitrary output dimensions $d_y$.
 \citet{johnson2018deep} showed that a width of $d_x$ is insufficient to achieve the UAP in continuous function spaces, while \citet{kidger2020universal} proved that a dimension of $d_x + d_y + 2$ is sufficient.
On the other hand, \citet{park2020minimum} presented the optimal minimum width for deep, narrow MLPs with ReLU activation functions in $L_p$ space.
Furthermore, \citet{cai2022achieve} explored the lower bound of the minimum width for arbitrary activation functions.

In this paper, we concentrate on the universal approximation of continuous functions under the uniform norm.
The previous results concerning uniform approximation are organized in Table \ref{table:result_table}.
So far, research on the minimum width for approximations under the uniform norm using continuous activation functions has suggested that the minimum width lies between $\operatorname{max}(d_x + 1, d_y)$ and $d_x + d_y$.
Recently, \citet{duan2023minimumleakyrelu} claimed that the upper bound could be reduced to $max(d_x + 1, d_y) + \boldsymbol{1}_{d_x+1 = d_y}$.
On the other hand, \citet{kim_park2023minimum} proved that the lower bound equals or exceeds $d_y+1$ if $d_y$ is less than or equal to $2d_x$.
This leads to the contradiction $ d_y+1 \leq w_{min} \leq d_y$ for $d_x +2 \leq d_y \leq 2d_x$.
Therefore, there should be a more rigorous proof of the minimum width for the uniform approximation of continuous functions.

In this context, we provide rigorous upper and lower bounds for the minimum width required for deep, narrow MLPs to possess the UAP.
It is substantiated by proving that the minimum width for deep, narrow MLPs with Leaky-ReLU activation function is equal to a geometrical function denoted as $w(d_x, d_y)$.
$w(d_x, d_y)$ is the required dimension of diffeomorphisms for approximating arbitrary continuous functions with $d_x$-dimensional input and $d_y$-dimensional output.
This is built upon the concept of the UAP of invertible neural networks.
Specifically, we employ the result of \citet{teshima2020coupling}, which demonstrated that approximating arbitrary $C^2$-diffeomorphisms is equivalent to approximating arbitrary single-coordinate transformations.
We prove that deep, narrow MLPs are capable of approximating single-coordinate transformations, thereby confirming their capability to approximate $C^2$-diffeomorphisms.
Using the above statement, we provide some upper and lower bounds.
By leveraging classical results from topological geometry, we establish that any continuous function can be approximated by MLPs with width $\operatorname{max}(2d_x+1,d_y)$.
Moreover, we provide the non-trivial lower bound $4$ for the case of input and output dimensions two, which provides a necessity of the framework.


Our contributions are as follows:
\begin{itemize}
    \item We prove that deep, narrow MLPs of width $d$ with Leaky-ReLU activation function can approximate any $C^2$-diffeomorphisms on $\mathbb{R}^d$.
    For more general activation functions, we prove that deep, narrow MLPs of width $d+1$ and $d+2$ with ReLU and general activation function, respectively, can approximate any $C^2$-diffeomorphisms on $\mathbb{R}^d$.
    \item We suggest the purely topological indicator $w(d_x, d_y)$, which is equal to the optimal minimum width for the UAP of deep, narrow MLP with Leaky-ReLU activation function. 
    \item Building on the above results, we prove that deep, narrow MLP with width $\operatorname{max}(2 d_x + 1, d_y) + \alpha(\sigma)$ can approximate any continuous function in $C(\mathbb{R}^{d_x}, \mathbb{R}^{d_y})$ on a compact domain, where $0\leq \alpha(\sigma)\leq 2$ is the constant depending on the activation function.
    \item We prove that width $4$ is the optimal minimum width that deep, narrow MLP to approximate arbitrary continuous function from a compact set $K\subset \mathbb{R}^2$ to $\mathbb{R}^2$.
\end{itemize}
\subsection{Organization}
The remainder of the paper is structured as follows:
In Section \ref{sec:notation_defintion}, we provide the notations and definitions that will be used.
Section \ref{sec:main_theories} addresses the key theorems in this paper.
    Subsection \ref{subsec:problem_formulation} formulated the core problem.
    Subsection \ref{subsec:approximating_diffeomorphism} explores the theorems and their corresponding proofs related to approximating a diffeomorphism using deep, narrow MLPs.
    Subsection \ref{subsec:diffeo_to_conti_embedding_extension} introduces a geometrical invariant that establishes a necessary and sufficient condition for the universal approximation of continuous functions.
    Subsection \ref{subsec:upper_bound} presents the proof of the universal approximation theorem for deep, narrow MLPs with a width of $\operatorname{max}(2d_x+1, d_y)$ and offers an alternative proof for the result by \citet{kidger2020universal}.
    Subsection \ref{subsec:lower_bound} establishes a minimum width lower bound of $4$ for the specific case when $d_x = d_y = 2$.
Section \ref{sec:conclusion} concludes the study.

\begin{table}[t]
\caption{A summary of known results on minimum width for universal approximation of continuous functions.
$K$ denotes a compact domain, and ''Conti.'' is short for continuous.}
\label{table:result_table}
\vspace{8pt}
\centering
\begin{threeparttable}
\begin{tabular}{|l |l|c|c  |  }
\hline 
\multicolumn{1}{|c|}{\textbf{Reference}} & \textbf{Domain} & \textbf{Activation $\sigma$} & \textbf{Upper\,/\,lower bounds}\\ 
\hline \hline
\cite{hanin2017approximating} & $C( K, \mathbb{R}^{d_y})$ & ReLU & $d_x + 1 \leq w_{min} \leq d_x + d_y$\\
\hline
\cite{johnson2018deep} & $C( K, \mathbb{R})$ & uniformly conti.\tnote{$\dag$} & $w_{min} \geq d_x + 1$\\
\hline
\multirow{2}{120pt}[-1pt]{\cite{kidger2020universal}} & $C( K, \mathbb{R}^{d_y})$ & conti.\ nonpoly\tnote{$\ddag$} &  $w_{min} \leq d_x + d_y + 1$\\
& $C( K, \mathbb{R}^{d_y})$ & nonaffine poly & $w_{min} \leq d_x + d_y + 2$\\
\hline 
\multirow{2}{120pt}[-1pt]{\cite{park2020minimum}} 
 &$C([0,1], \mathbb{R}^{2})$ &
ReLU & 
$w_{min} = 3 > \max \{d_x+1, d_y\}$\\
 &$C( K, \mathbb{R}^{d_y})$ &
ReLU+STEP & 
$w_{min} = \max \{d_x+1, d_y\}$\\
\hline
    \multirow{4}{120pt}[-1pt]{\cite{cai2022achieve}}
     &$C({K},\mathbb{R}^{d_y})$ &Arbitrary & $w_{\min} \ge \max(d_x,d_y)$  \\
     &$C({K},\mathbb{R}^{d_y})$& ReLU+FLOOR      & $w_{\min} = \max(d_x,d_y,2)$ \\
     &$C({K},\mathbb{R}^{d_y})$& UOE+FLOOR     & $w_{\min} = \max(d_x,d_y)$  \\
     &$C([0,1],\mathbb{R}^{d_y})$ & UOE    & $w_{\min} = d_y$  \\
    \hline
   \citet{kim_park2023minimum}& $C({K},\mathbb{R}^{d_y})$ & uniformly conti.\tnote{$\dag$}& $w_{\min} \geq d_y + \boldsymbol{1}_{ d_x<d_y \leq 2d_x}$ \\
    \hline
    \multirow{6}{120pt}[-1pt]{\textbf{Ours}}
     &$C({K},\mathbb{R}^{d_y})$ &Leaky-ReLU & $w_{\min} \leq \max(2d_x+1,d_y)$  \\
     &$C({K},\mathbb{R}^{d_y})$& ReLU    & $w_{\min} \leq \max(2d_x+1,d_y)+ 1$ \\
     &$C({K},\mathbb{R}^{d_y})$& conti.\ nonpoly\tnote{$\ddag$}     & $w_{\min} \leq \max(2d_x+1,d_y) + 2$  \\
     &$C([0,1]^2,\mathbb{R}^{2})$ & ReLU & $w_{\min} = 4 $  \\
     &$C([0,1]^2,\mathbb{R}^{2})$ & Leaky-ReLU & $w_{\min} = 4 $  \\
     &$C([0,1]^2,\mathbb{R}^{2})$ & uniformly conti.\tnote{$\dag$} & $w_{\min} \geq 4 $  \\
    \hline
\end{tabular}
\begin{tablenotes}
\item[$\dag$] requires that $\sigma$ is uniformly approximated by a sequence of one-to-one functions.
\item[$\ddag$] requires that $\sigma$ is continuously differentiable at at least one point (say $z$), with $\sigma'(z) \neq 0$.
\end{tablenotes}
\end{threeparttable}
\label{tbl:summary}
 \end{table}

\section{Notation and Definition}\label{sec:notation_defintion}
In this section, we introduce notations and definitions used throughout this paper.
\begin{itemize}
    \item $\mathbb{R}$ represents the set of real numbers.
    \item $\mathbb{R}_{+}$ denotes the set of positive real numbers.
    \item $\mathbb{N}$ is the set of natural numbers, and $\mathbb{N}_0 = \mathbb{N}\cup \{0\}$.
    \item For $a,b\in \mathbb{R}$, $[a,b]$ and $(a,b)$ represent the closed and open intervals from $a$ to $b$, respectively.
    \item $M_{n,m}$ is the set of $n\times m$ real matrices.
    \item $GL(n)\subset M_{n,n}$ is the set of invertible matrices.
    \item $\operatorname{Aff}_{n,m}$ and $\operatorname{IAff}_{n}$ are sets of affine transformations from $\mathbb{R}^n$ to $\mathbb{R}^m$ and invertible affine transformations from $\mathbb{R}^n$ to $\mathbb{R}^n$, respectively. 
    \item For a $d$-dimensional vector $x\in \mathbb{R}^d$, $x_i$ will denote the $i$-th component of $x$; that is, $x = (x_1,x_2,\dots, x_d)$. 
    And, $x_{i:j}$ will represent the $(j-i+1)$-dimensional vector $(x_i,x_{i+1},\dots, x_j)$.
\end{itemize}
\subsection{Compact Approximation}
$C(X,Y)$ represents the set of continuous functions from $X$ to $Y$. 
For a function $f\in C(X,Y)$ and a set $X'\subset X$, $f|_{X'}$ denotes a restriction of the function to the domain $X'$.
For a set of functions $\mathcal{A}\subset C(X,Y)$, $\mathcal{A}|_{X'}$ is defined as $\{f|_{X'}|f\in \mathcal{A}\}$.
We are concerned with the uniform approximation of a continuous function on a compact set, defined as follows:
\begin{definition}
    For two function spaces, $\mathcal{A}, \mathcal{B}\subset C(\mathbb{R}^n,\mathbb{R}^m)$, we say that $\mathcal{A}$ \textbf{compactly approximates} $\mathcal{B}$ if for any $f\in \mathcal{B}$, a compact set $K\subset \mathbb{R}^n$, and $\epsilon>0$, there exists $g\in \mathcal{A}$ such that 
\begin{equation}
    \|f-g\|_{\infty,K}:= \operatorname{sup}_{x\in K}\|f(x)-g(x)\|_{\infty}<\epsilon.
\end{equation}
This is denoted as $\mathcal{A} \succ \mathcal{B}$ or $\mathcal{B} \prec \mathcal{A}$.
\end{definition}
The compact approximation relation is transitive: if $\mathcal{A} \succ \mathcal{B}$, and $\mathcal{B} \succ \mathcal{C}$, then, $\mathcal{A} \succ \mathcal{C}$.
\begin{proof}
    Consider an arbitrary $f\in \mathcal{C}$ and a compact set $K\subset \mathbb{R}^m$.
    Because $\mathcal{B} \succ \mathcal{C}$, there exists $g\in \mathcal{B}$ such that $\|f-g\|_{\infty, K}<\frac{\epsilon}{2}$.
    Because $\mathcal{A} \succ \mathcal{B}$, there exists $h\in \mathcal{A}$ such that $\|g-h\|_{\infty, K}<\frac{\epsilon}{2}$.
    Then, $\|f-h\|_{\infty, K}<\|f-g\|_{\infty, K}+\|g-h\|_{ \infty, K} <{\epsilon}$, and the relation $\mathcal{A} \succ \mathcal{C}$ holds.
    \end{proof}
We also use the notation $f\prec \mathcal{A}$ to indicate that $\{f\}\prec \mathcal{A}$.
For a set of functions $\mathcal{A}\subset C(X,Y)$, $\overline{\mathcal{A}}$ is the closure with respect to the uniform norm.

\subsection{Activation Function}
We follow the commonly used condition for activation functions as proposed by \citet{kidger2020universal}.
\begin{condition}\label{condition:activation}
    An activation function $\sigma$ is a $C^1$-function near $\alpha\in \mathbb{R}$, with $\sigma'(\alpha)\neq 0$.
\end{condition}
We define several activation functions that satisfy Condition \ref{condition:activation}.
\begin{itemize}
    \item ReLU: $\text{ReLU}(x) := \begin{cases}
        x &\text{ if } x \geq 0\\ 0 &\text{ if }x<0
    \end{cases}$.
    \item Leaky-ReLU : $\lr{\beta}(x) := \begin{cases}
        x &\text{ if } x \geq 0\\ \beta x &\text{ if }x<0
    \end{cases}$.
\end{itemize}
Activation functions applied to vectors function as componentwise operators. 
For $x\in \mathbb{R}^d$,
    \begin{equation}
        \sigma(x) := (\sigma(x_1), \dots, \sigma_{x_d}).
    \end{equation}
    
\subsection{Deep, Narrow MLP}
A set of MLPs, denoted as $\narrow{\sigma}{d_0, d_1, \dots, d_N}$, is defined as follows: 
     \begin{equation}
         \narrow{\sigma}{d_0, d_1, \dots, d_N}:=\left\{\left. f: \mathbb{R}^{d_0}\rightarrow \mathbb{R}^{d_N} \right| W_i\in \operatorname{Aff}_{d_{i-1},d_{i}}, f(x)=W_N\circ \sigma\circ \dots \circ \sigma \circ W_1\right\}.
     \end{equation}
For Leaky-ReLU, an additional parameter $\beta$ can vary for each layer, resulting in the set $\narrow{\lr{}}{d_0, d_1, \dots, d_N}$:
     \begin{equation}
         \narrow{\lr{}}{d_0, d_1, \dots, d_N}:=\left\{\left. W_N\circ \lr{\beta_{N-1}}\circ \dots \circ \lr{\beta_{1}}\circ W_1: \substack{\mathbb{R}^{d_0}\\ \rightarrow \mathbb{R}^{d_N}}  \right| W_i\in \operatorname{Aff}_{d_{i-1},d_{i}}, \beta_i\in \mathbb{R}_{+}, \right\}.
     \end{equation}
     We can define a set of deep, narrow MLPs with input dimension $d_x$, output dimension $d_y$, and at most $n$ intermediate dimensions as follows:
     \begin{equation}
         \narrow{\sigma}{d_x, d_y, n}:= \bigcup_{N\in \mathbb{N}_0}\bigcup_{1\leq d_1, d_2, \dots, d_N \leq n}\narrow{\sigma}{d_x, d_1, d_2 \dots,  d_N,d_y} .
     \end{equation}
     For natural numbers $n\geq m\in \mathbb{N}$, we define the natural projection $p_{n,m}:\mathbb{R}^{n}\rightarrow \mathbb{R}^{m}$ and the inclusion $q_{m,n}$ as follows:
     \begin{equation}
         p_{n,m}: (x_1,\dots, x_n) \mapsto (x_1,\dots, x_m),
     \end{equation}
     and
     \begin{equation}
        q_{m,n}: (x_1,\dots, x_m)\mapsto (x_1, \dots, x_m, 0,\dots, 0).
    \end{equation}
    We can check that for any $f\in \narrow{\sigma}{d_x, d_y, n}$, $f$ can be decomposed as:
    \begin{equation}
        f = p_{n,d_y } \circ g\circ q_{d_x, n},
    \end{equation}
    where $g\in \narrow{\sigma}{n, n, n}$.

\subsection{Subsets of Diffeomorphisms}
In this section, we define several subsets of the set of diffeomorphisms.
\begin{definition}[Invertible Neural Networks]
For any natural number $d$, let $\mathcal{G}$ be a subset of invertible functions from $\mathbb{R}^d$ to $\mathbb{R}^d$. 
Then, $\operatorname{INN}_{\mathcal{G}}$ is defined as:
\begin{equation}
    \operatorname{INN}_{\mathcal{G}}: =\left\{\left. W_1\circ g_1\circ\dots \circ W_n\circ g_n \circ W_{n+1}\right| n\in \mathbb{N}, g_i\in\mathcal{G}, W_i\in \operatorname{IAff}_{d}\right\}
\end{equation}
\end{definition}
Note that the approximation capability of $\operatorname{INN}_{\mathcal{G}}$ remains unchanged even if $\mathrm{IAff}_d$, in the definition, is replaced with $\mathrm{Aff}_{d,d}$.

\begin{definition}[Diffeomorphism: $\mathcal{D}^{r}({U})$]
Let $U\subset \mathbb{R}^d$ be an open subset, and let $r$ be a non-negative integer or infinity.
     $\mathcal{D}^{r}({U})$ is the set of $C^r$-diffeomorphisms from $U$ to $\mathbb{R}^d$.
\end{definition}
\begin{definition}[Compactly supported diffeomorphism: $\operatorname{Diff}^r_c(\mathbb{R}^d)$]
    A diffeomorphism $f:\mathbb{R}^d\rightarrow \mathbb{R}^d$ is compactly supported if there exists a compact subset $K\subset \mathbb{R}^d$ such that for any $x\notin K$, $f(x)=x$.
    $\operatorname{Diff}^r_c(\mathbb{R}^d)$ is the set of all compactly supported $C^r$-diffeomorphisms from $\mathbb{R}^d$ to $\mathbb{R}^d$.
\end{definition}
\begin{definition}[Single-coordinate transformations: $\mathcal{S}^r_{c}(\mathbb{R}^d)$]
    $\mathcal{S}^r_{c}(\mathbb{R}^d)$ is the set of all compactly supported $C^r$-diffeomorphisms defined as follows:
    \begin{equation}
       \mathcal{S}^r_{c}(\mathbb{R}^d):= \left\{ \left.\tau\in \operatorname{Diff}^r_c(\mathbb{R}^d) \right| \tau(x)=(x_1,\dots, x_{d-1}, \tau_{d}(x)), \tau_d\in C(\mathbb{R}^d ,\mathbb{R})\right\}.
    \end{equation}
\end{definition}
\begin{definition}[Single-coordinate affine coupling flows]
$\text{ACF}_d$ is the set of all single-coordinate affine coupling flows defined as follows:
    \begin{equation}
      \text{ACF}_d:=  \left\{\left.(x_1,\dots, x_{d-1}, exp(s(x_{1:d-1})x_d  + t(x_{1:d-1})))\right| s,t\in C(\mathbb{R}^{d-1}, \mathbb{R})\right\},
    \end{equation}
\end{definition}

\section{Main Theorem}\label{sec:main_theories}
\subsection{Problem Formulation}\label{subsec:problem_formulation}
Our primary objective is to determine the minimum width $w_{\min}\in \mathbb{N}$ such that for any compact set $K\subset \mathbb{R}^n$, a continuous function $f\in C(K, \mathbb{R}^m)$ can be uniformly approximated by $\narrow{\sigma}{n,m, w_{\min}}$.
In other words, we want to find the value $w_{\min}(n,m,\sigma)$ such that
\begin{equation}
    w_{min}(n,m,\sigma):= \operatorname{min}\left\{l\in \mathbb{N}\left| C(\mathbb{R}^n, \mathbb{R}^m) \prec \narrow{\sigma}{n,m,l} \right.\right\}.
\end{equation}

\subsection{Approximating Diffeomorphisms}\label{subsec:approximating_diffeomorphism}
In this subsection, we initially establish the capability of deep, narrow MLPs to approximate diffeomorphisms.
\begin{theorem}\label{thm:nn_is_invertible_approximator}
Let $\sigma$ be a continuous function that satisfies Condition \ref{condition:activation}. Then, for a natural number $d\in \mathbb{N}$, the set $\narrow{\sigma}{d, d, d + \alpha(\sigma)}$ compactly approximates $\mathcal{D}^{2}(\mathbb{R}^d)$, where 
\begin{equation}
    \alpha(\sigma) = \begin{cases}
    0 &\text{ if } \sigma = \text{Leaky-ReLU} 
    \\ 1 & \text{ if } \sigma = \text{ReLU}
    \\ 2 & \text{ if } \sigma = \text{otherwise}
\end{cases}.
\end{equation}
 In other words, we have the relation
 \begin{equation}
     \mathcal{D}^{2}(\mathbb{R}^d) \prec \narrow{\sigma}{d,d, d + \alpha(\sigma)}.
 \end{equation}
\end{theorem}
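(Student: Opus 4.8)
The plan is to realize $\mathcal{D}^{2}(\mathbb{R}^d)\prec\narrow{\sigma}{d,d,d+\alpha(\sigma)}$ as a telescoping chain of compact approximations. Two structural facts make the chain collapse to a single nontrivial ingredient. First, $\succ$ is transitive (as just proved). Second, $\narrow{\sigma}{d,d,w}$ is closed under composition: concatenating two such networks and fusing the two adjacent affine maps into one (a composition of affine maps is affine) produces again a member of $\narrow{\sigma}{d,d,w}$; moreover pure affine maps $\mathbb{R}^d\to\mathbb{R}^d$ already belong to $\narrow{\sigma}{d,d,d}$ (the case with no hidden layers). Consequently, if $\narrow{\sigma}{d,d,w}$ compactly approximates every element of some family $\mathcal{G}$ of diffeomorphisms and also every affine map, then — by uniform continuity of the approximants on slightly enlarged compacta, so that per-block errors compose controllably — it compactly approximates $\operatorname{INN}_{\mathcal{G}}$. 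So it suffices to (i) exhibit a small generating family $\mathcal{G}$ with $\mathcal{D}^{2}(\mathbb{R}^d)\prec\operatorname{INN}_{\mathcal{G}}$, and (ii) approximate a single member of $\mathcal{G}$ by $\narrow{\sigma}{d,d,d+\alpha(\sigma)}$.

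For step (i) I would invoke the work of \citet{teshima2020coupling}. Up to composing with a linear reflection (which handles orientation reversal) and up to replacing a given diffeomorphism on a fixed compact set by one of compact support (uniform approximation on a compact set does not see what happens far away), one obtains $\mathcal{D}^{2}(\mathbb{R}^d)\prec\operatorname{INN}_{\mathcal{S}^2_c(\mathbb{R}^d)}$, and then $\mathcal{S}^2_c(\mathbb{R}^d)\prec\operatorname{INN}_{\text{ACF}_d}$ (a single-coordinate transformation, after an affine vertical shift making its last coordinate positive on the relevant box, is approximated by a few single-coordinate affine coupling flows). Combining these with transitivity and composition-closure of $\operatorname{INN}_{\text{ACF}_d}$ gives $\mathcal{D}^{2}(\mathbb{R}^d)\prec\operatorname{INN}_{\text{ACF}_d}$, reducing the theorem to the claim: every $\tau\in\text{ACF}_d$ satisfies $\tau\prec\narrow{\sigma}{d,d,d+\alpha(\sigma)}$.

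Step (ii) is the heart of the argument and I would prove it by an explicit construction on a fixed compact box $K$. Translate coordinates so that the first $d-1$ inputs stay in the positive orthant; then Leaky-ReLU and ReLU act as the identity there, so the first $d-1$ neurons can carry $x_{1:d-1}$ forward affinely, layer after layer (a ``memory'' block of width $d-1$), while the $d$-th neuron is a ``worker'' whose affine reads may involve all of $x_{1:d-1}$ and whose scalar state $v_k$ is updated by one nonlinearity per layer: $v_{k+1}=\ell_k(x_{1:d-1})+b_k\,\sigma(\beta_k v_k+m_k(x_{1:d-1}))$ with $\ell_k,m_k$ affine. Starting from $v_0=x_d$ and iterating, one assembles, a ``corner'' at a time, a continuous piecewise-linear approximation of the target coordinate function $x_d\mapsto\exp\!\big(s(x_{1:d-1})x_d+t(x_{1:d-1})\big)$, exactly in the spirit of the standard deep–narrow constructions of \citet{hanin2017approximating} and \citet{kidger2020universal}; the point is that invertibility of Leaky-ReLU lets the worker feed back into itself and lets the memory be refreshed without a scratch register, so width exactly $d$ suffices ($\alpha(\sigma)=0$). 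For ReLU one extra neuron ($\alpha(\sigma)=1$) restores the bidirectional information flow that Leaky-ReLU provides for free; for a general $\sigma$ satisfying Condition~\ref{condition:activation}, two extra neurons ($\alpha(\sigma)=2$) are used to synthesize a ReLU-like gadget from $\sigma$ by affine pre- and post-composition around the distinguished point $\alpha$ and a limiting argument. Upgrading ``single $\tau$'' to all of $\operatorname{INN}_{\text{ACF}_d}$ as in step (i), and applying transitivity, finishes the proof.

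The main obstacle, I expect, is precisely the width-$d$ Leaky-ReLU construction in step (ii): the worker neuron's dependence on $x_{1:d-1}$ enters only through additive shifts (the multiplicative gain $b_k$ is a fixed weight entry, not a function of the other coordinates), yet the target rescaling $x_d\mapsto e^{t(x_{1:d-1})}e^{s(x_{1:d-1})x_d}$ is genuinely $x_{1:d-1}$-dependent in both its rate and its amplitude. Resolving this requires showing that composing many layers with $x_{1:d-1}$-dependent breakpoint locations manufactures the needed product-type joint dependence on $(x_{1:d-1},x_d)$ from affine shifts alone, together with careful bookkeeping to keep the memory block and the worker from corrupting each other while every pre-activation in the memory block stays in the region where the activation is invertible.
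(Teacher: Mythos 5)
Your overall architecture (reduce $\mathcal{D}^2(\mathbb{R}^d)$ via \citet{teshima2020coupling} to a small generating family, then approximate that family with width $d+\alpha(\sigma)$ networks and use transitivity plus composition-closure) matches the paper's strategy, but the chain you build has a genuine gap at its first link. You reduce everything to approximating $\text{ACF}_d$, which requires the sup-norm statement $\mathcal{S}^2_c(\mathbb{R}^d)\prec\operatorname{INN}_{\text{ACF}_d}$, and your parenthetical justification does not hold: a single-coordinate affine coupling flow is \emph{affine} in $x_d$ for each fixed $x_{1:d-1}$, and composing ``a few'' of them acting on the last coordinate (even after a vertical shift) is still affine in $x_d$, so it cannot uniformly approximate a single-coordinate transformation that is nonlinear in $x_d$. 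What \citet{teshima2020coupling} actually provide for ACFs is $L^p$-universality; their sup-norm result (Theorem 1(B)) takes $\mathcal{S}^\infty_c(\mathbb{R}^d)$ itself as the generating family, which is exactly how the paper uses it (Lemma \ref{lemma:equivialence}). The paper then never claims ACF-INNs are sup-universal; instead it proves directly that $\narrow{\lr{}}{d,d,d}$ approximates $\mathcal{S}^\infty_c(\mathbb{R}^d)$, with ACF-to-network approximation (Lemma \ref{lemma:approximating_acf}) serving only as one ingredient alongside the strictly-stronger Leaky-ReLU coupling operations of Lemma \ref{lemma:leaky_ReLU_technical} and a slice-matching induction over the grid $x_d=i/N$. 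Your reduction therefore replaces the hard part with an unproved (and, as stated, false) claim.

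The second gap is the width-$d$ Leaky-ReLU construction itself, which you correctly identify as the heart of the matter but leave unresolved: the ``worker/memory'' sketch never explains how affine, additive reads of $x_{1:d-1}$ can manufacture the $x_{1:d-1}$-dependent \emph{multiplicative} action $x_d\mapsto e^{s(x_{1:d-1})}x_d+t(x_{1:d-1})$, nor how to go beyond such affine-in-$x_d$ maps to general single-coordinate transformations. The paper's resolution has two concrete mechanisms you are missing: (a) conjugation by $\log$/$\exp$ on the last coordinate (each a monotone scalar map, hence width-preserving by Lemma \ref{lemma:increasing_with_leaky} and its corollary), which converts the additive update of Lemma \ref{lemma:approximating_acf} into the multiplicative one and yields all of $\text{INN}_{\text{ACF}_d}$ within $\narrow{\lr{}}{d,d,d}$; and (b) the technical Lemma \ref{lemma:leaky_ReLU_technical}, an iterative Leaky-ReLU clipping construction with $x_{1:d-1}$-dependent offsets, which lets one rescale the network's values on the slice $x_d=\alpha_2$ by an arbitrary positive function $b(x_{1:d-1})$ without disturbing the slice $x_d\le\alpha_1$; induction over slices then matches an arbitrary $\tau\in\mathcal{S}^\infty_c(\mathbb{R}^d)$. (For the ReLU and general-$\sigma$ cases the paper simply cites Theorem 1 of \citet{hanin2017approximating} and Proposition 4.2 of \citet{kidger2020universal} applied to $x\mapsto(x,\tau(x))$ and projects, so your extra-neuron gadgets are in the right spirit but also not needed once the reduction is set up correctly.) As it stands, your proposal identifies the correct high-level skeleton but neither of the two steps that actually carry the proof.
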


To prove the theorem, we introduce a lemma that suggests we can focus on approximating $\mathcal{S}_{\mathrm{c}}^{\infty}(\mathbb{R}^d)$ to achieve the approximation of diffeomorphisms.
\begin{lemma}\label{lemma:equivialence}
The following relation holds:
\begin{equation}
     \mathrm{INN}_{\mathcal{S}_{\mathrm{c}}^{\infty}(\mathbb{R}^d)} \succ \mathcal{D}^{2}(\mathbb{R}^d).
\end{equation}

\end{lemma}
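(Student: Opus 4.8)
My plan is to prove $\mathrm{INN}_{\mathcal{S}_{\mathrm{c}}^{\infty}(\mathbb{R}^d)} \succ \mathcal{D}^{2}(\mathbb{R}^d)$ by factoring the approximation through compactly supported diffeomorphisms. Concretely, I would show two things and glue them with the transitivity of $\succ$ established above: (Step 1) on any compact set, an arbitrary $C^2$-diffeomorphism coincides, after post-composing with a reflection and an arbitrarily small smoothing, with the restriction of an element of $\operatorname{Diff}^{\infty}_c(\mathbb{R}^d)$; and (Step 2) $\mathrm{INN}_{\mathcal{S}_{\mathrm{c}}^{\infty}(\mathbb{R}^d)} \succ \operatorname{Diff}^{\infty}_c(\mathbb{R}^d)$. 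Since $\mathrm{INN}_{\mathcal{S}_{\mathrm{c}}^{\infty}(\mathbb{R}^d)}$ is stable under post-composition with $\operatorname{IAff}_d$, and a coordinate reflection is an $\ell^\infty$-isometry, the reflection in Step 1 is harmless. Step 2 is the substantive part and follows the single-coordinate decomposition of \citet{teshima2020coupling}.

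\emph{Step 1 (reduction to compact support).} Fix $f\in\mathcal{D}^{2}(\mathbb{R}^d)$, a compact $K$, and $\epsilon>0$. A standard smoothing replaces $f$ by a $C^\infty$-diffeomorphism within $\epsilon/2$ of it in $\|\cdot\|_{\infty,K}$; write it as $\rho\circ f_0$ with $\rho\in\operatorname{IAff}_d$ a coordinate reflection (or the identity) making $f_0$ orientation preserving. Choose a closed ball $\bar B\supset K$ and translate so that $f_0(0)=0$. The shrinking isotopy $F_t(x)=t^{-1}f_0(tx)$, $t\in(0,1]$, extends smoothly to $t=0$ by $F_0=Df_0(0)$ and is a $C^\infty$ path of diffeomorphisms of $\mathbb{R}^d$; concatenating it with a $C^\infty$ path from $Df_0(0)$ to $\mathrm{id}$ within the identity component $GL^{+}(d)$ of $GL(d)$ gives a $C^\infty$ isotopy from $\mathrm{id}$ to $f_0$. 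Restricting this isotopy to the compact codimension-zero submanifold $\bar B$ and applying the isotopy extension theorem yields $g\in\operatorname{Diff}^{\infty}_c(\mathbb{R}^d)$ with $g|_{\bar B}=f_0|_{\bar B}$; in particular $g$ and $f_0$ agree on $K$.

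\emph{Step 2 ($\operatorname{Diff}^{\infty}_c \prec \mathrm{INN}_{\mathcal{S}_{\mathrm{c}}^{\infty}}$).} Let $g\in\operatorname{Diff}^{\infty}_c(\mathbb{R}^d)$; being compactly supported and orientation preserving, it is isotopic to $\mathrm{id}$ through compactly supported diffeomorphisms. Following \citet{teshima2020coupling}: (i) subdividing such an isotopy writes $g$ as a finite composition of compactly supported diffeomorphisms each $C^1$-close to $\mathrm{id}$; (ii) the fragmentation lemma rewrites each near-identity factor as a finite composition of factors supported in small balls; (iii) conjugating each small, near-identity factor $\phi$ by an affine map that normalizes its supporting ball and by coordinate permutations---all elements of $\operatorname{IAff}_d$---one triangulates $\phi$, so that up to arbitrarily small sup error it is a composition of $d$ near-identity diffeomorphisms each altering only the last coordinate, i.e.\ elements of $\mathcal{S}_{\mathrm{c}}^{\infty}(\mathbb{R}^d)$. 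The last-coordinate functions arising here are obtained from the components of $\phi$ by the implicit function theorem (applicable since $\phi$ is $C^1$-near $\mathrm{id}$), hence are $C^\infty$, compactly supported, and invertible. Reassembling the conjugations, $g$ is approximated uniformly on compacts by a composition of elements of $\operatorname{IAff}_d$ and of $\mathcal{S}_{\mathrm{c}}^{\infty}(\mathbb{R}^d)$---an element of $\mathrm{INN}_{\mathcal{S}_{\mathrm{c}}^{\infty}(\mathbb{R}^d)}$. This gives $\mathrm{INN}_{\mathcal{S}_{\mathrm{c}}^{\infty}(\mathbb{R}^d)} \succ \operatorname{Diff}^{\infty}_c(\mathbb{R}^d)$; chaining with Step 1 via transitivity of $\succ$, reinstating $\rho$, and absorbing the smoothing error yields $\mathrm{INN}_{\mathcal{S}_{\mathrm{c}}^{\infty}(\mathbb{R}^d)} \succ \mathcal{D}^{2}(\mathbb{R}^d)$.

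I expect the main obstacle to be Step 2(iii): realizing a $C^1$-small compactly supported diffeomorphism as a uniform limit of compositions of single-coordinate transformations while simultaneously keeping every factor compactly supported and $C^\infty$, and then controlling how the per-factor sup errors propagate through the composition using the uniformly bounded Lipschitz constants of the true factors---this, together with the fragmentation-into-near-identity-pieces step, is the technical heart, essentially imported from \citet{teshima2020coupling}. A secondary point needing care is Step 1: checking that the shrinking isotopy is jointly $C^\infty$ down to $t=0$, and that the isotopy extension theorem, applied to the codimension-zero submanifold-with-boundary $\bar B$, returns a compactly supported $C^\infty$ diffeomorphism.
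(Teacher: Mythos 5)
Your route differs from the paper's in presentation: the paper's entire proof is a one-line application of Theorem 1(B) of \citet{teshima2020coupling} --- since $\mathcal{S}^{\infty}_{\mathrm{c}}(\mathbb{R}^d)$ is locally bounded and invertible and $\mathrm{INN}_{\mathcal{S}^{\infty}_{\mathrm{c}}(\mathbb{R}^d)}$ trivially approximates $\mathcal{S}^{\infty}_{\mathrm{c}}(\mathbb{R}^d)$, that theorem gives $\mathrm{INN}_{\mathcal{S}^{\infty}_{\mathrm{c}}(\mathbb{R}^d)} \succ \mathcal{D}^{2}(\mathbb{R}^d)$ directly --- whereas you reconstruct the internals of that theorem: smoothing plus the shrinking isotopy $t^{-1}f_0(tx)$ and the isotopy extension theorem to replace a $C^2$-diffeomorphism on a compact set by a compactly supported $C^\infty$-diffeomorphism (up to an affine reflection), then fragmentation into near-identity pieces supported in small balls and coordinate-wise splitting into single-coordinate transformations. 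What your version buys is a self-contained explanation of why the citation works; what it costs is that the technically hard parts (fragmentation and the single-coordinate splitting with propagation of sup errors) are still explicitly imported from \citet{teshima2020coupling}, so in substance both arguments rest on the same source, and your Step 1 essentially re-proves the corresponding reduction lemma inside that work.

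One intermediate claim in your Step 2 is false as stated: it is not true that every $g\in\operatorname{Diff}^{\infty}_c(\mathbb{R}^d)$ is isotopic to the identity through compactly supported diffeomorphisms. Compact support forces orientation preservation, but not membership in the identity component: by Cerf's theorem $\pi_0\operatorname{Diff}^{\infty}_c(\mathbb{R}^d)\cong\Theta_{d+1}$ for $d\geq 5$, which is nontrivial, e.g.\ $\Theta_{7}\cong\mathbb{Z}/28$ for $d=6$. The gap is repairable and does not affect the lemma, because the $g$ produced by your Step 1 is by construction the time-one map of a compactly supported ambient isotopy starting at the identity, hence lies in the identity component, and the fragmentation machinery you invoke is stated precisely for such maps. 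So Step 2 should be claimed only for the identity component of $\operatorname{Diff}^{\infty}_c(\mathbb{R}^d)$ --- which is all that Step 1 requires --- rather than for all compactly supported diffeomorphisms; alternatively, record in Step 1 that $g$ comes equipped with the isotopy and carry that datum into Step 2.
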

\begin{proof}
   This result is a direct consequence of Theorem 1(B) of \citet{teshima2020coupling}.
   Since $\mathcal{S}_{\mathrm{c}}^{\infty}(\mathbb{R}^d)$ is locally bounded, which is due to its continuity and invertible, it satisfies the conditions of Theorem 1. 
   Given that $\mathrm{INN}_{\mathcal{S}_{\mathrm{c}}^{\infty}(\mathbb{R}^d)} \succ \mathcal{S}_{\mathrm{c}}^{\infty}(\mathbb{R}^d)$, we can conclude that $\mathrm{INN}_{\mathcal{S}_{\mathrm{c}}^{\infty}(\mathbb{R}^d)} \succ \mathcal{D}^{2}(\mathbb{R}^d)$.
    
\end{proof}

\begin{proof}[Proof of Theorem \ref{thm:nn_is_invertible_approximator}]
    According to Lemma \ref{lemma:equivialence}, it suffices to prove that the set of neural networks can serve as an approximator for $\mathcal{S}^{\infty}_{c}$: $\narrow{\sigma}{d, d, d + \alpha(\sigma)}\succ \mathcal{S}^{\infty}_{c}(\mathbb{R}^d) $.
    
    For $\sigma = \text{Leaky-ReLU}$, we need to prove that $\narrow{\sigma}{d, d, d}\succ \mathcal{S}^{\infty}_{c}(\mathbb{R}^d) $.
       We can accomplish this by employing Lemma \ref{lemma:approximating_single_coordinate_transforms}.
   
    In the case of $\sigma = \text{ReLU}$, by Theorem 1 in \cite{hanin2017approximating}, for $f(x) = (x_1, \dots, x_d, \tau(x))$, we have $f\prec {\narrow{\sigma}{d, d+1, d+1}}$. 
    Therefore, $\left(x_1,\dots, x_{d-1}, \tau(x)\right) \in \narrow{\sigma}{d, d, d+1}$, implying that $ \mathcal{S}^{\infty}_{c}(\mathbb{R}^d)\prec \narrow{\sigma}{d, d, d+1}$.

   For other continuous activation functions $\sigma$, Proposition 4.2 of \cite{kidger2020universal} demonstrates that for $f(x) = (x_1, \dots, x_d, \tau(x))$, we have $f\prec {\narrow{\sigma}{d, d+1, d+2}}$. 
    Consequently, $\left(x_1,\dots, x_{d-1}, \tau(x)\right) \in \narrow{\sigma}{d, d, d+2}$, concluding that $ \mathcal{S}^{\infty}_{c}(\mathbb{R}^d)\prec \narrow{\sigma}{d, d, d+2}$.
    
    It is important to note that although the original statements in Theorem 1 of \cite{hanin2017approximating} and Proposition 4.2 of \cite{kidger2020universal} do not explicitly state that the approximated function has the form $(x_1,\dots, x_d, \tau(x))$, their proofs implicitly make use of this form.
\end{proof}
Now, the remaining task is to prove the following lemma for the Leaky-ReLU case.
\begin{lemma}[Single-Coordinate Transformations to Leaky-ReLU]\label{lemma:approximating_single_coordinate_transforms}
The following relation holds:
\begin{equation}
    \narrow{\lr{}}{d, d, d}\succ \mathcal{S}^{\infty}_{c}(\mathbb{R}^d) .
\end{equation}
\end{lemma}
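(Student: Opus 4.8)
The plan is to reduce, using transitivity of $\succ$ and the fact that $\narrow{\lr{}}{d,d,d}$ is closed under composition (concatenating two width-$d$ Leaky-ReLU MLPs and merging the two affine layers at the seam is again one such MLP), to exhibiting each $\tau\in\mathcal S^\infty_c(\mathbb R^d)$ as a long composition of very simple width-$d$ Leaky-ReLU blocks. Fix $\tau(x)=(x_{1:d-1},\tau_d(x))$, a compact set $K$, and $\epsilon>0$; since $\tau$ is the identity off a compact set and leaves $x_{1:d-1}$ untouched, we may take $K=[-M,M]^d$ with $\tau_d$ agreeing with $x_d$ off this cube. The Jacobian of $\tau$ is block lower triangular with determinant $\partial_{x_d}\tau_d$, which is continuous, nowhere zero, and equals $1$ somewhere, hence is positive on all of the connected space $\mathbb R^d$; therefore, for each fixed $u=x_{1:d-1}$, the section $x_d\mapsto\tau_d(u,x_d)$ is a strictly increasing $C^\infty$ diffeomorphism of $\mathbb R$ equal to the identity outside $[-M,M]$. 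All approximants below keep the first $d-1$ coordinates exactly equal to $u$ on $K$, so the task reduces to realizing this $u$-indexed family of increasing maps of the last coordinate.

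\textbf{The shift-and-restore trick.} The reason width exactly $d$ suffices (rather than $d+1$, as for plain ReLU) is that Leaky-ReLU is invertible, so no coordinate is ever irrecoverably lost, and we may route all nonlinearity onto the last coordinate while keeping the first $d-1$ coordinates out of the nonlinearity. Concretely, fixing $c>M$ and using $\lr{\beta}(t)=t$ for $t\ge0$, the three-step composition ``add $c$ to coordinates $1,\dots,d-1$; apply $\lr{\beta}$ componentwise; subtract $c$ back'' leaves $x_{1:d-1}$ untouched (during any such Leaky-ReLU step these coordinates equal $u\in[-M,M]^{d-1}$, hence exceed $0$) while applying a genuine one-dimensional Leaky-ReLU to whatever affine combination of the current state we loaded into the last coordinate. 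Hence, for every $w\in\mathbb R^d$ with $w_d\neq0$, every $b\in\mathbb R$, and every $\beta>0$, the map $\phi_{w,b,\beta}(x)=(x_{1:d-1},\lr{\beta}(\langle w,x\rangle+b))$ is realized on $K$ by a member of $\narrow{\lr{}}{d,d,d}$; so is every affine shear $(u,x_d)\mapsto(u,ax_d+\langle v,u\rangle+b)$, and, by closure under composition, every finite composite of these. A composite of such blocks is exactly a map $(u,x_d)\mapsto(u,\Psi(u,x_d))$ for which, at fixed $u$, $\Psi(u,\cdot)$ is an iterated one-dimensional Leaky-ReLU whose layer offsets depend affinely on $u$; it is therefore a continuous, strictly increasing, piecewise-linear function of $x_d$ whose breakpoints move with $u$ along hyperplanes and whose one-sided slopes are products of the chosen $\beta_i$'s, controlled cell by cell of a polyhedral partition of $(u,x_d)$-space.

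\textbf{Building $\tau_d$.} It now suffices to approximate $\tau_d$ uniformly on $K$ by such a $\Psi$. Pick a fine grid $-M=t_0<\dots<t_N=M$; replacing each section $\tau_d(u,\cdot)$ by its piecewise-linear interpolant through the nodes $(t_j,\tau_d(u,t_j))$ costs at most the modulus of continuity of $\tau_d$ at scale $\max_j(t_{j+1}-t_j)$, uniformly in $u\in[-M,M]^{d-1}$, and preserves monotonicity and the identity-near-the-ends property because $j\mapsto\tau_d(u,t_j)$ is increasing and matches $t_j$ for $j$ near $0$ and $N$. The node images $u\mapsto\tau_d(u,t_j)$ are continuous but not affine in $u$, so we cannot use them as offsets directly; instead we build the interpolant one breakpoint at a time, each installation using a bounded number of the blocks $\phi_{w,b,\beta}$ and affine shears whose parameters are read off from affine approximations of the relevant continuous functions of $u$ on a sufficiently fine simplicial subdivision of $[-M,M]^{d-1}$ — the affine-approximation error is absorbed into $\epsilon$ by refining the subdivision, and monotonicity in $x_d$ is preserved automatically since every block is monotone in $x_d$. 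Summing the errors yields $\tau\prec\narrow{\lr{}}{d,d,d}$, hence $\narrow{\lr{}}{d,d,d}\succ\mathcal S^\infty_c(\mathbb R^d)$. (An alternative bookkeeping goes through the affine coupling flows: first approximate $\mathcal S^\infty_c(\mathbb R^d)$ by compositions of single-coordinate flows, then approximate each such flow, whose last coordinate is $\exp(s(u)x_d+t(u))$, by the elementary blocks above, using a piecewise-linear approximation of $\exp$ and reducing the product $s(u)x_d$ to a short composition of shears and $\phi$-blocks.)

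\textbf{Expected main obstacle.} Unlike the width-$(d+1)$ ReLU construction of \cite{hanin2017approximating}, here there is no spare coordinate in which to stash a clean copy of $x_d$ while building a nonlinear function of all $d$ inputs, so the construction must keep exploiting the invertibility of Leaky-ReLU to recycle the information carried by the last coordinate, all the while keeping the realized map strictly increasing in $x_d$ and asymptotically the identity. Making the breakpoint-installation step uniform over all $u\in[-M,M]^{d-1}$ — controlling the sup-norm error, the monotonicity, and the boundary normalization simultaneously across every section — is where the substantive work lies.
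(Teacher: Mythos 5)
Your reduction to strictly increasing sections, the observation that $\narrow{\lr{}}{d,d,d}$ is closed under composition, and the shift-and-restore trick for keeping $x_{1:d-1}$ intact on $K$ are all sound, and your overall skeleton (discretize $x_d$ and install the node values $u\mapsto\tau_d(u,t_j)$ one at a time) is essentially the skeleton of the paper's proof. The problem is that the installation step --- the only genuinely hard part, which you yourself flag as ``where the substantive work lies'' --- is asserted rather than carried out, and the mechanism you gesture at is not adequate. Two capabilities are needed, and neither follows from ``a bounded number of blocks $\phi_{w,b,\beta}$ and affine shears with parameters read off from affine approximations on a fine simplicial subdivision.'' First, one must add an arbitrary continuous, non-affine function $t(x_{1:d-1})$ to the last coordinate while leaving the first $d-1$ coordinates exactly intact; the paper establishes this (Lemma~\ref{lemma:approximating_acf}) by temporarily overwriting one of the first coordinates with the invertible affine image $b\cdot x_{1:d-1}+c$, applying $\lr{\beta}$ there, adding a multiple to the last coordinate, and then inverting. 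Your blocks never do this: every nonlinearity you allow acts on an affine form that must contain $x_d$ with nonzero coefficient, so even adding a single ridge function of $u$ alone to $x_d$ requires an additional limiting argument (e.g.\ large slope in the $u$-direction with $\beta\to 1$) that you do not supply.

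Second, and more seriously, matching the node values at $t_{j+1}$ without disturbing the already-matched values at $t_0,\dots,t_j$ amounts to multiplying the part of the function above the level $x_d=t_j$ by a $u$-dependent positive factor $b(x_{1:d-1})$ while fixing the part below. A finite composite of your blocks multiplies the $x_d$-slope on each polyhedral cell by a constant (a product of the chosen $\beta_i$'s), so ``a bounded number of blocks'' cannot produce a continuously $u$-varying multiplicative factor, and piecewise-affine approximation of the node heights does not repair this, because the resulting error interacts with every later installation and with the monotonicity and boundary normalization you need to preserve; you give no bookkeeping showing these errors stay controlled. This is exactly why the paper needs the separate technical Lemma~\ref{lemma:leaky_ReLU_technical}, proved by an infinite iterative scheme (the partition functions $\phi_i$ and the contraction $\gamma_{i+1}\le\gamma_i^{2/3}$) and by passing to the closure $\overline{\left.\narrow{\lr{}}{d,d,d}\right|_{K}}$, before running the induction over grid levels. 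As written, your proposal is a plausible plan whose core step is missing rather than a proof.
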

The proof of this lemma involves a series of lemmas and corollaries that gradually extend the scope of functions that can be approximated using Leaky-ReLU.

The following lemma implies that any increasing function can be approximated by composing Leaky-ReLUs and affine transformations.
    \begin{lemma}[Increasing Functions to Leaky-ReLU]\label{lemma:increasing_with_leaky}
     Define the sets as follows:
     \begin{equation}
         U_0 := \left\{\left. ax +b\right| a\in \mathbb{R}_{+}, b\in \mathbb{R}\right\},
     \end{equation}
\begin{equation}
    U_{n+1}:= \left\{ \left. a\lr{\beta} (f )+b\right|a, \beta\in \mathbb{R}_{+}, b\in \mathbb{R}, f\in U_n\right\},
\end{equation}
\begin{equation}
    U:= \bigcup_{n=0}^{\infty} U_{n}.
\end{equation}
    Then, for any continuous, increasing activation function $\sigma:\mathbb{R}\rightarrow \mathbb{R}$, 
    \begin{equation}
     \sigma\prec U.   
    \end{equation}
    \end{lemma}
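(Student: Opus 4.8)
The plan is to show that $U$ is exactly the set of continuous, strictly increasing, piecewise-linear functions $\mathbb{R}\to\mathbb{R}$ with finitely many breakpoints, and then to approximate an arbitrary continuous increasing $\sigma$ on a compact interval by such a function via ordinary piecewise-linear interpolation. Throughout, note that every member of $U$ is automatically a strictly increasing homeomorphism of $\mathbb{R}$, since $\lr{\beta}$ with $\beta\in\mathbb{R}_{+}$ is and positive-slope affine maps are.

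First I would treat one-breakpoint maps together with a composition bookkeeping fact. A map of the form $x\mapsto \lr{s_0/s_1}\big(s_1(x-p)\big)+q$ lies in $U_1$ whenever $s_0,s_1\in\mathbb{R}_{+}$, and it is precisely the continuous piecewise-linear function with a single breakpoint at $p$, value $q$ there, left slope $s_0$ and right slope $s_1$; conversely every such one-breakpoint function arises this way. I would also prove $U_m\circ U_1\subseteq U_{m+1}$ for all $m\ge 0$ by induction on $m$: substituting a $U_1$-map into the innermost affine layer $a_0(\cdot)+b_0$ of a $U_m$-map turns that layer into another $U_1$-map while leaving the rest of the nesting intact, so the depth rises by exactly one.

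Next I would prove, by induction on the number $n$ of breakpoints, that every continuous strictly increasing piecewise-linear $\phi\colon\mathbb{R}\to\mathbb{R}$ with $n$ breakpoints lies in $U_n$. The case $n=0$ is $U_0$. For $n\ge 1$, let $p_1$ be the first breakpoint, let $s_0$ be the slope of $\phi$ on $(-\infty,p_1)$ and $s_1$ its slope just to the right of $p_1$, and let $g\in U_1$ be the one-breakpoint map above with breakpoint $p_1$, value $\phi(p_1)$, left slope $s_0$ and right slope $s_1$. Then $g$ agrees with $\phi$ on $(-\infty,p_1]$, so $h:=\phi\circ g^{-1}$ is the identity on $(-\infty,\phi(p_1)]$; moreover the one-sided slopes of $h$ at $\phi(p_1)$ both equal $1$ (on the right, $(g^{-1})'=1/s_1$ while $\phi'=s_1$), so no breakpoint is created there, and the remaining breakpoints of $h$ are exactly the $n-1$ points $g(p_2),\dots,g(p_n)$. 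Hence $h$ is continuous strictly increasing piecewise-linear with $n-1$ breakpoints, so $h\in U_{n-1}$ by the inductive hypothesis, and $\phi=h\circ g\in U_{n-1}\circ U_1\subseteq U_n$. Finally, given continuous increasing $\sigma$, a compact $K$ and $\epsilon>0$, I would enlarge $K$ to $[-R,R]$, choose a partition fine enough that the oscillation of $\sigma$ on each subinterval is below $\epsilon$ (uniform continuity), and let $\phi$ be the piecewise-linear interpolant of $\sigma$ at the partition points, extended affinely with some positive slopes outside $[-R,R]$; if $\sigma$ is only non-decreasing, perturb the interpolation values to be strictly increasing at a cost of at most $\epsilon$. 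Then $\phi$ is continuous strictly increasing piecewise-linear, $\|\sigma-\phi\|_{\infty,K}<\epsilon$, and $\phi\in U$, giving $\sigma\prec U$.

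The main obstacle is the induction above: one must verify that passing from $\phi$ to $h=\phi\circ g^{-1}$ removes exactly one breakpoint — in particular that the kink of $g^{-1}$ at $\phi(p_1)$ is cancelled rather than inherited — and one must have the reassembly lemma $U_m\circ U_1\subseteq U_{m+1}$ in hand so that the peeled-off factor is reabsorbed without inflating the nesting depth, which is what keeps the final bound at the sharp $U_n$ (and hence, downstream, width $d$ in the Leaky-ReLU case). The remaining, analytic part (piecewise-linear interpolation of a continuous increasing function) is routine.
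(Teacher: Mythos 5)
Your proof is correct, and it follows the same overall strategy as the paper — reduce to piecewise-linear interpolation and show by induction that every strictly increasing piecewise-linear map with $n$ breakpoints lies in $U_n$ — but your induction step is organized in the opposite direction. The paper removes the \emph{last} (rightmost) breakpoint: it straightens $f$ beyond the final breakpoint $\alpha_{n_0+1}$ to obtain $f_0\in U_{n_0}$ and then recovers $f$ as $\tfrac{\gamma_2}{\gamma_1}\lr{\gamma_1/\gamma_2}\left(f_0-f(\alpha_{n_0+1})\right)+f(\alpha_{n_0+1})$, i.e.\ by applying the new Leaky-ReLU layer on the \emph{outside}, which matches the recursive definition of $U_{n+1}$ verbatim and needs no composition bookkeeping. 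You instead remove the \emph{first} breakpoint by precomposing with a one-breakpoint map $g\in U_1$, which forces you to prove two auxiliary facts the paper avoids: the reassembly lemma $U_m\circ U_1\subseteq U_{m+1}$ and the slope-cancellation argument showing that $h=\phi\circ g^{-1}$ has exactly $n-1$ breakpoints. Both auxiliary steps are correct (substituting the $U_1$-map into the innermost affine layer does give $U_m\circ U_1\subseteq U_{m+1}$ by induction on $m$, and the kink of $g^{-1}$ at $\phi(p_1)$ is cancelled since both one-sided slopes of $h$ there equal $1$), so the extra cost is only length, and you still land at the sharp depth $U_n$. One small point in your favor: you explicitly handle a merely non-decreasing $\sigma$ by perturbing the interpolation values to be strictly increasing, a case the paper leaves implicit (its formula requires the slopes $\gamma_1,\gamma_2$ to be positive); just split the error budget (say $\epsilon/2$ for interpolation and $\epsilon/2$ for the perturbation), since as written you spend up to $\epsilon$ twice while claiming a total error below $\epsilon$.
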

The proof of Lemma \ref{lemma:increasing_with_leaky} is provided in Appendix \ref{Proof of Lemma_increasing_with_leaky}.
The lemma directly implies the subsequent corollary: deep, narrow MLPs with Leaky-ReLU activation function can approximate a deep, narrow MLP with an increasing activation function and the same width.
\begin{corollary}[Generalization of Activation]
     For any continuous, increasing activation function $\sigma$, the following relation holds: 
    \begin{equation}\label{eq:activation_generalization}
        \narrow{\sigma}{d, d, d}\prec \narrow{\lr{}}{d, d, d}.
    \end{equation}
\end{corollary}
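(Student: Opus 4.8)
The plan is to lift the scalar approximation $\sigma \prec U$ provided by Lemma~\ref{lemma:increasing_with_leaky} to the vector-valued, multilayer setting, and then to recognize the resulting network as an element of $\narrow{\lr{}}{d,d,d}$.

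First I would fix an arbitrary $f \in \narrow{\sigma}{d,d,d}$, written as $f = W_N \circ \sigma \circ W_{N-1} \circ \cdots \circ \sigma \circ W_1$ with $W_i \in \operatorname{Aff}_{d_{i-1},d_{i}}$, $d_0 = d_N = d$, and $1 \le d_i \le d$. Since $\sigma$ is continuous and increasing, Lemma~\ref{lemma:increasing_with_leaky} gives, for any compact interval $I$ and any $\delta>0$, some $u \in U$ with $\sup_{t\in I}|\sigma(t)-u(t)| < \delta$. Unwinding the recursive definition of $U$, one has $u = a_n\lr{\beta_n}(\cdots(a_1\lr{\beta_1}(a_0 t + b_0) + b_1)\cdots) + b_n$ with all $a_j,\beta_j \in \mathbb{R}_{+}$; applied coordinatewise it defines a map $\bar u = D_n \circ \lr{\beta_n} \circ D_{n-1} \circ \cdots \circ \lr{\beta_1} \circ D_0$ (in whatever dimension $\le d$ is relevant), where each $D_j$ is a diagonal affine map with positive diagonal, and $\bar u$ is within $\delta$ of $\sigma$ in $\|\cdot\|_{\infty}$ on $I^d$. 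The same scalar $u$ can be reused for every activation layer since all hidden layers apply the same $\sigma$.

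Next I would carry out the routine error-propagation argument. Given the target compact set $K \subset \mathbb{R}^d$ and $\epsilon>0$: the partial compositions of $f$ are continuous, so each activation layer only ever sees inputs ranging over a fixed compact box; the affine maps $W_i$ and the maps $\lr{\beta}$ are globally Lipschitz, so replacing each $\sigma$-layer by a coordinatewise $\bar u$ that is close enough on a box slightly larger than the relevant one keeps the accumulated output error below $\epsilon$ on $K$. Concretely, one fixes the boxes going forward from the input, enlarging slightly to absorb the perturbations, and fixes the scalar tolerances going backward from the output using the Lipschitz constants. This produces a uniform approximant $\tilde f = W_N \circ \bar u \circ W_{N-1} \circ \cdots \circ \bar u \circ W_1$ of $f$ on $K$.

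Finally I would observe that $\tilde f \in \narrow{\lr{}}{d,d,d}$: substituting the factorization $\bar u = D_n \circ \lr{\beta_n} \circ \cdots \circ D_0$ at each activation layer and absorbing each diagonal factor $D_0$ into the preceding affine map and each $D_n$ into the following one (a composition of affine maps is affine), $\tilde f$ becomes an alternating composition of affine maps and Leaky-ReLU maps $\lr{\beta_j}$ with $\beta_j \in \mathbb{R}_{+} \subset \mathbb{R}$, with every hidden width at most $d$ --- exactly the defining form of $\narrow{\lr{}}{d,d,d}$. I expect the only mildly delicate point to be the bookkeeping in the error-propagation step, namely ordering the choices of boxes and scalar tolerances so that the induction on the number of layers closes; the scalar-to-coordinatewise reduction and the final algebraic collapse into the target network class are immediate.
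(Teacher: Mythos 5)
Your proposal is correct and follows exactly the route the paper intends: the paper treats the corollary as a direct consequence of Lemma \ref{lemma:increasing_with_leaky}, replacing each $\sigma$-layer coordinatewise by an element of $U$ (a composition of scalar affine maps with positive slope and Leaky-ReLUs), absorbing the diagonal affine factors into the neighboring affine layers so the width stays at most $d$, and controlling the accumulated error via the standard Lipschitz/compactness propagation you describe. The only difference is that you spell out the error-propagation bookkeeping that the paper leaves implicit.
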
   
Using the above corollary, we can prove that any ACF can be approximated by Leaky-ReLU deep, narrow MLPs. 
    \begin{lemma}[ACF to Leaky-ReLU]\label{lemma:approximating_acf}
    The following relation holds:
    \begin{equation}
        \text{INN}_{\text{ACF}_d}\prec\narrow{\lr{}}{d, d, d}.
    \end{equation}
        \end{lemma}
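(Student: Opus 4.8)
The plan is to use the composition–closure of $\narrow{\lr{}}{d,d,d}$ to reduce the statement to the realizability of a handful of elementary building blocks, and then to build each block. First I would record two soft facts: $\narrow{\lr{}}{d,d,d}$ contains $\operatorname{Aff}_{d,d}$ (the $N=0$ term of the union) and is closed under composition (concatenate two such MLPs; the two adjacent affine layers at the seam merge into one affine layer, and all intermediate widths stay $\le d$). Together with a routine \emph{composition–propagation} fact — if $\mathcal{A}$ is composition-closed and each continuous $f_i$ satisfies $f_i\prec\mathcal{A}$, then $f_1\circ\cdots\circ f_k\prec\mathcal{A}$ (approximate from the inside out, using uniform continuity of each $f_i$ on the compact intermediate images to control the propagation of error) — this reduces Lemma~\ref{lemma:approximating_acf} to showing that a single $h\in\text{ACF}_d$ satisfies $h\prec\narrow{\lr{}}{d,d,d}$.

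Next I would decompose a single ACF map. Write $h(x)=(u,\exp(s(u)x_d+t(u)))$ with $u=x_{1:d-1}$; since $h$ must be invertible in $x_d$, the continuous function $s$ is nowhere zero on the connected set $\mathbb{R}^{d-1}$, so up to conjugating by the affine map $x_d\mapsto -x_d$ we may assume $s>0$, whence $\log s(u)$ and $t(u)$ are continuous and bounded on any compact set. On a compact domain with $x_d\in[-R,R]$, fix $M>R$ and observe that
\[
h \;=\; E \circ A_2 \circ E \circ A_1 \circ L \circ T_M ,
\]
where $T_M$ shifts the last coordinate by $M$ (affine), $L$ applies an increasing extension of $w\mapsto\log w$ to the last coordinate while fixing $u$, $A_1$ is the additive coupling $(u,w)\mapsto(u,w+\log s(u))$, $E$ applies $w\mapsto\exp w$ to the last coordinate while fixing $u$, and $A_2$ is the additive coupling $(u,w)\mapsto(u,w+t(u)-M\,s(u))$; tracing the last coordinate through the composition returns $\exp(s(u)x_d+t(u))$. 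Thus it suffices to realize inside $\narrow{\lr{}}{d,d,d}$: (i) affine maps; (ii) ``apply an increasing $1$-D function to the last coordinate, keep the rest''; and (iii) additive couplings $(u,w)\mapsto(u,w+G(u))$ for continuous $G\colon\mathbb{R}^{d-1}\to\mathbb{R}$. The key point of this decomposition is that the only genuine coupling of $x_d$ to $u$ — the product $s(u)x_d$ — has been routed through a $\log/\exp$ conjugation and thereby converted into an additive coupling, so no ``multiplicative coupling'' gadget is ever needed.

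Block (i) is immediate. For block (ii): by Lemma~\ref{lemma:increasing_with_leaky} the target $1$-D function is compactly approximated by compositions of maps $w\mapsto a\,\lr{\beta}(w)+b$ with $a,\beta>0$, and each such scalar map is implemented on the last coordinate while leaving $u$ fixed by first shifting the first $d-1$ coordinates far into the positive (identity) branch of $\lr{\beta}$, applying $\lr{\beta}$, and restoring them. For block (iii): since $\lr{}$ is non-polynomial, shallow $\lr{}$-networks are dense in $C$ on compacta, so $G$ is compactly approximated by a finite sum $\sum_j c_j\,\lr{\gamma_j}(\langle a_j,u\rangle+b_j)$, and the additive coupling by this sum is the composition of the \emph{unit shears} $(u,w)\mapsto(u,\,w+c\,\lr{\gamma}(\langle a,u\rangle+b))$. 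So everything comes down to realizing one unit shear in $\narrow{\lr{}}{d,d,d}$ on a prescribed compact set, and \emph{this is the step I expect to be the main obstacle}: with only $d$ neurons one cannot simultaneously hold $u$, the passive coordinate $w$, and a freshly synthesized nonlinear feature. The construction exploits the invertibility of Leaky-ReLU, namely $\lr{\gamma}^{-1}=\lr{1/\gamma}$. Assuming (after a coordinate permutation) $a_1\neq 0$, I would: apply an invertible affine map sending $(u,w)$ to $(\langle a,u\rangle+b,\ u_2+M_0,\dots,u_{d-1}+M_0,\ w+M_0)$, with all but the first coordinate pushed positive; apply $\lr{\gamma}$, turning the first coordinate into $\lr{\gamma}(\langle a,u\rangle+b)$ and fixing the rest; apply an affine map keeping the first coordinate and coordinates $2,\dots,d-1$ but replacing the last coordinate by $w+M_0+c\,\lr{\gamma}(\langle a,u\rangle+b)+M_1$ (positive for large $M_1$); apply $\lr{1/\gamma}$, which restores $\langle a,u\rangle+b$ in the first coordinate and fixes the rest; and finally apply the inverse affine bookkeeping map, recovering $(u_1,\dots,u_{d-1})$ and subtracting the constants, leaving exactly $w+c\,\lr{\gamma}(\langle a,u\rangle+b)$ in the last coordinate. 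This is a width-$d$, three-affine/two-activation MLP realizing the unit shear exactly on the given compact set.

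Assembling the pieces yields $h\prec\narrow{\lr{}}{d,d,d}$ for every $h\in\text{ACF}_d$, and the composition–propagation fact of the first paragraph upgrades this to $\text{INN}_{\text{ACF}_d}\prec\narrow{\lr{}}{d,d,d}$. Throughout, the shift constants $M,M_0,M_1$ and the accuracies of the sub-approximations must be chosen in terms of the target compact set and of the compact intermediate images produced along the way — precisely what the composition–propagation statement packages. The only place the specific activation is used essentially is the unit-shear construction: invertibility of $\lr{}$ is what lets us ``un-mix'' the coordinates after synthesizing the nonlinear feature, and it is exactly its failure for a non-invertible activation that forces the extra width (the source of $\alpha(\mathrm{ReLU})=1$).
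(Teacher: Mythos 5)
Your proposal is correct and follows essentially the same route as the paper's proof: realize the width-$d$ shear $x_d\mapsto x_d+a\,\lr{\beta}(b\cdot x_{1:d-1}+c)$ by affine conjugation exploiting $\lr{\beta}^{-1}=\lr{1/\beta}$, compose such shears (using density of shallow Leaky-ReLU networks) to get arbitrary additive couplings, and obtain the multiplicative part by conjugating with increasing $\log/\exp$ maps on the last coordinate via Lemma \ref{lemma:increasing_with_leaky}. The differences are only in bookkeeping: you make explicit the compact-set shifting trick, the propagation of approximation errors under composition, and the pre-shift by $M$ that keeps the logarithm's argument positive—details the paper leaves implicit.
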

        Proof of Lemma \ref{lemma:approximating_acf} is provided in Appendix \ref{proof_of_lemma_approximating_acf}.
Next, we establish a technical lemma.
This lemma serves as the multidimensional counterpart of Lemma \ref{lemma:increasing_with_leaky}.
For a multidimensional function from $\mathbb{R}^d$ to $\mathbb{R}$ increasing with a coordinate $x_d$, we can freely change the value when $x_d $ is large, while the value remains unaffected when $x_d$ is small. 
    \begin{lemma}\label{lemma:leaky_ReLU_technical}
        Consider a compact set $K = [0,1]^d \subset\mathbb{R}^{d}$, two distinct real values $\alpha_1 < \alpha_2$, and a single-coordinate transformation $F = (x_1,\dots,x_{d-1}, f(x))\in \mathcal{S}^r_c$.
        The function $f(x)$ satisfies the following relation:
        \begin{equation}
            f(x)\leq 0 \text{ if } x_d < \alpha_1, \text{ and }  f(x)= 0 \text{ if } x_d = \alpha_1.
        \end{equation}
        Assuming that $F\in \overline{\left.\narrow{\lr{}}{d,d,d}\right|_{K}}$, for a continuous function $b:\mathbb{R}^{d-1}\rightarrow\mathbb{R}$ such that $b(x_{1:d-1}) >0 $ for all $x\in K$, there exists a single-coordinate transformation $G = (x_1, \dots, x_{d-1},g(x))\in \overline{\left.\narrow{\lr{}}{d,d,d}\right|_{K}}$ that satisfies the following relation:
        \begin{equation}
            g(x):= \begin{cases}
                f(x) &\text{ if } x_d\leq \alpha_1
              \\  f(x)b(x_{1:d-1})  &\text{ if } x_d = \alpha_2
            \end{cases}.
        \end{equation}
    \end{lemma}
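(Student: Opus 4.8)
The plan is to realize $G$ as a composition $G=H\circ F$, where $H=(x_1,\dots,x_{d-1},h(p,y))$ is an auxiliary single-coordinate transformation acting only on the last coordinate, with $p:=x_{1:d-1}$. We may assume $0\le\alpha_1<\alpha_2\le 1$, since otherwise one of the two prescribed conditions is vacuous on $K=[0,1]^d$ and the statement degenerates. The first observation I would use is that $\narrow{\lr{}}{d,d,d}$ is closed under composition (concatenating two such networks keeps the width at $d$ and merges the two adjoining affine layers into one), and that this property passes to closures on compact sets because each such network is Lipschitz on bounded sets: if $F\in\overline{\left.\narrow{\lr{}}{d,d,d}\right|_{K}}$ and $H\in\overline{\left.\narrow{\lr{}}{d,d,d}\right|_{K'}}$ for some fixed compact box $K'\supset F(K)$, then $H\circ F\in\overline{\left.\narrow{\lr{}}{d,d,d}\right|_{K}}$ (approximate $F$ finely enough that its image stays near $K'$, then approximate $H$ on $K'$). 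Hence it suffices to construct $h$ with the prescribed boundary data and to check that the resulting $H$ is compactly approximable on $K'$.

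Next I would extract the monotonicity information. Since $F$ is a $C^{r}$-diffeomorphism we have $\partial_{x_d}f>0$ everywhere, which together with the hypotheses $f\le 0$ for $x_d<\alpha_1$ and $f=0$ at $x_d=\alpha_1$ forces $f(x)>0$ for $x_d>\alpha_1$. Put $c(p):=f(p,\alpha_2)$; this is continuous and bounded below by some $c_0>0$ on $[0,1]^{d-1}$, and the range of $f$ on $K$ is contained in $(-M,Y_0)$ for suitable $M,Y_0>0$ with $\max_p c(p)<Y_0$. I then want $h\colon[0,1]^{d-1}\times\mathbb{R}\to\mathbb{R}$ such that (i) $h(p,y)=y$ for $y\le 0$ and for $y\ge Y_0$; (ii) $h(p,\cdot)$ is a strictly increasing diffeomorphism of $\mathbb{R}$; and (iii) $h(p,c(p))=c(p)\,b(p)$. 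Granting such an $h$ (extended by $h(p,y)=y$ for $p\notin[0,1]^{d-1}$ so that $H$ is compactly supported), set $g(x):=h(p,f(x))$. For $x_d\le\alpha_1$ we have $f(x)\le 0$, hence $g(x)=f(x)$; at $x_d=\alpha_2$ we have $f(x)=c(p)$, hence $g(x)=c(p)b(p)=f(x)b(p)$; and $\partial_{x_d}g=\partial_yh(p,f(x))\,\partial_{x_d}f>0$, so $g$ is strictly increasing in $x_d$. Moreover $G=H\circ F$ fixes $x_1,\dots,x_{d-1}$ and is the identity outside the union of the supports of $F$ and $H$, so it is a single-coordinate transformation.

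The substantive step is to realize $h$, or a close enough variant, using only the ingredients already known to lie below $\narrow{\lr{}}{d,d,d}$: affine maps of $\mathbb{R}^d$, width-$d$ MLPs with a continuous increasing activation (admissible by the Corollary), and single-coordinate affine coupling flows $(p,y)\mapsto(p,\exp(s(p)y+t(p)))$ with $s,t\in C(\mathbb{R}^{d-1},\mathbb{R})$ (admissible by Lemma~\ref{lemma:approximating_acf}); any composition of these stays in $\overline{\left.\narrow{\lr{}}{d,d,d}\right|_{K'}}$ by the composition argument above. The idea is to localize to the half-line $\{y>0\}$, which is exactly the image under $f$ of $\{x_d>\alpha_1\}$, and to perform there a $p$-dependent monotone reparametrization that realizes the two interpolation data $0\mapsto 0$ and $c(p)\mapsto c(p)b(p)$ and relaxes to the identity beyond $Y_0$, while leaving $\{y\le 0\}$ fixed pointwise. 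Concretely one sandwiches a controlled affine-coupling rescaling between fixed affine reflections and shears, choosing the continuous coefficients $s(p),t(p)$ so as to hit the required multipliers, and tapers the result with an increasing-activation layer so the map equals the identity outside a compact range of $y$; alternatively one fixes a $C^\infty$ bump $\phi$ that is flat at $0$ and supported in $(0,Y_0)$, puts $h(p,y):=y+\kappa(p)\phi(y)$ with $\kappa(p):=c(p)(b(p)-1)/\phi(c(p))$, and then separately argues that this particular family of profiles is compactly approximable by the admissible blocks.

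The main obstacle is precisely requirement (i): an affine coupling flow \emph{bends} a half-line exponentially rather than fixing it pointwise, so arranging that $h(p,y)\equiv y$ exactly on all of $(-\infty,0]$ --- as the statement demands, not merely up to $\epsilon$ --- while gluing smoothly across $y=0$ to a genuinely $p$-dependent rescaling on $(0,Y_0)$, and simultaneously keeping $\partial_y h$ strictly positive so that $g$ is strictly increasing in $x_d$, is where the construction must be carried out with care. I expect this to force a multi-layer sandwich (straighten the relevant half-line with one affine-coupling block, rescale, then re-bend) rather than a one-shot formula, with the bookkeeping of the coefficients as continuous functions of $p$ being the only genuinely delicate part.
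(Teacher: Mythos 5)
Your reduction is fine as far as it goes: writing $G=H\circ F$ with $H=(p,h(p,y))$, $p=x_{1:d-1}$, where $h(p,\cdot)$ is increasing, fixes $y\le 0$, and sends $c(p)=f(p,\alpha_2)$ to $c(p)b(p)$, is essentially the same setup the paper uses (its corrections are likewise post-compositions acting only on the last coordinate, with $p$-dependent offsets). The problem is that the substantive content of the lemma — showing that such an $H$ lies in $\overline{\left.\narrow{\lr{}}{d,d,d}\right|_{K'}}$ — is exactly what you leave open. The building blocks you invoke cannot do it directly: an ACF acts on the last coordinate as $y\mapsto s(p)y+t(p)$, affine in $y$ for each $p$, so no composition of ACFs and affine maps produces the required kink at $y=0$ with a \emph{$p$-dependent} slope ratio; and the Leaky-ReLU layers of the network have a constant parameter $\beta$ per layer, not a function of $p$. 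Your own sketches run into this: the ``sandwich'' idea (kink, ACF rescale by $b(p)$, inverse kink) ends up rescaling both half-lines by $b(p)$, and the bump-profile family $h(p,y)=y+\kappa(p)\phi(y)$ with non-constant $\kappa$ is, as a target for width-$d$ Leaky-ReLU approximation, essentially a restatement of the lemma, which you defer with ``separately argues.'' Flagging the obstacle and predicting ``a multi-layer sandwich with delicate bookkeeping'' is not a construction; the bookkeeping \emph{is} the lemma.

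For comparison, the paper overcomes precisely this point with an iterative scheme: after reducing to $b\ge 1$, it repeatedly applies maps of the form $y\mapsto \gamma_i^{1/3}\lr{\gamma_i^{-1/3}}\bigl(y-h_i(p)\bigr)+h_i(p)$, i.e.\ a \emph{constant}-ratio Leaky-ReLU above a \emph{$p$-dependent} threshold $h_i(p)$ (available because adding functions of $p$ to the last coordinate is in reach by Lemma~\ref{lemma:approximating_acf}). The threshold is built from a partition-of-unity function interpolating between the set where the current ratio $b/b_i$ is already near $1$ (threshold raised above the current value, so nothing changes there) and the set where it is largest (threshold $0$, so the full factor $\gamma_i^{1/3}$ is applied), which yields the contraction $\gamma_{i+1}\le\gamma_i^{2/3}$ and hence uniform convergence to the exact $p$-dependent factor in the closure, while the region $x_d\le\alpha_1$ (where the argument of the kink is nonpositive) is untouched at every step. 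Without this idea, or an equivalent mechanism for synthesizing a $p$-dependent slope from constant-slope Leaky-ReLUs and $p$-dependent shifts, your proposal does not prove the statement.
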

The proof of Lemma \ref{lemma:leaky_ReLU_technical} is provided in Appendix \ref{proof_leaky_ReLU_technial}.
With the help of this lemma, we can prove Lemma \ref{lemma:approximating_single_coordinate_transforms}.
\begin{proof}[Proof of Lemma \ref{lemma:approximating_single_coordinate_transforms}]
    Consider an arbitrary single-coordinate transformation $F(x) = \left(x_1,\dots, x_{d-1}, \tau(x_1,\dots, x_d) \right)$ and a compact set $K\subset \mathbb{R}^d$.
    Without loss of generality, we assume that $K = [0,1]^d$. In the case where $K$ is not $[0,1]^d$, we can rescale the domain of the function to fit within $[0,1]^d$ and continuously extend the domain to $[0,1]^d$.
    Additionally, assume that $\tau$ is strictly increasing with respect to $x_d$.
    
    Because $\tau$ is a continuous function defined on a compact set, for an arbitrary $\epsilon >0$, a natural number $N\in \mathbb{N}$ exists such that if $\|x - x'\|<\frac{1}{N}$, then $|\tau(x) - \tau(x')|<\epsilon$.
    Now, define $u_i:\mathbb{R}^{d-1}\rightarrow \mathbb{R}$ as follows:
    \begin{equation}
        u_i(x_{1:d-1}):= F\left(x_{1:d-1}, \frac{i}{N}\right).
    \end{equation}
    If there exists a single-coordinate transformation $G = (x_1, \dots, x_{d-1}, g(x_{1:d}))\prec \narrow{\lr{}}{d,d,d}$ such that $u_i(x_{1:d-1})=g\left(x_{1:d-1},\frac{i}{N}\right)$ for $x\in K$, then $F\prec  \narrow{\lr{}}{d,d,d}$.
    We will demonstrate the existence of a sequence $\left\{G_n\right\}^{\infty}_{n=1}\subset  \overline{\left.\narrow{\lr{}}{d,d,d}\right|_{K}}$ such that $G_n(x_{1:d-1}, \frac{i}{N}) = u_{i}(x_{1:d-1})$ for $1\leq i\leq n$ through mathematical induction.
    
    By Lemma \ref{lemma:leaky_ReLU_technical}, there exists a single-coordinate transformation $G_0 = (x_1,\dots, x_{d-1}, g_0(x))$ such that $g_0(x_{1:d-1},0) = u_0(x_{1:d-1})$.
    Assume that the induction hypothesis holds for some $n=n_0$, meaning that there exists a single-coordinate transformation $G_{n_0} = (x_1,\dots, x_{d-1}, g_{n_0}(x))$ such that $g_{n_0}(x_{1:d-1},\frac{i}{N}) = u_i(x_{1:d-1})$ for $1\leq i\leq n_0$.
    Then, by Lemma \ref{lemma:approximating_acf}, we can construct $G_{n_0}':=(x_1, \dots, x_{d-1}, g_{n_0}(x) - u_{n_0}(x_{1:d-1}))\in \overline{\left.\narrow{\lr{}}{d,d,d}\right|_{K}}$.
    Notably, $G'_{n_0}$ satisfies the assumptions of Lemma \ref{lemma:leaky_ReLU_technical} with $\alpha_1= \frac{n_0}{N}$ and $\alpha_2= \frac{n_0+1}{N}$.
    By applying Lemma \ref{lemma:leaky_ReLU_technical} with $b(x_{1:d-1}) = \frac{u_{n_0+1}(x_{1:d-1})- u_{n_0}(x_{1:d-1})}{ g_{n_0}(x_{1:d-1}, \frac{n_0+1}{N})- u_{n_0}(x_{1:d-1})}$, we obtain a single-coordinate transformation $G''(n_0)= (x_1,\dots, x_{d-1}, g''_{n_0}(x))$ such that $g''_{n_0}(x_{1:d-1}, \frac{i}{N}) = u_{i}(x_{1:d-1})-u_{n_0}(x_{1:d-1})$ for $i\leq n_0+1$.
    Finally, by Lemma \ref{lemma:approximating_acf}, we can get $G_{n_0+1}:= (x_1, \dots, x_{d-1}, g''_{n_0}(x) + u_{n_0}(x_{1:d-1}))\in \overline{\left.\narrow{\lr{}}{d,d,d}\right|_{K}}$.
    As a result, the induction hypothesis is satisfied, and this completes the proof.
\end{proof}

\subsection{Diffeormorphism to Continuous Function}\label{subsec:diffeo_to_conti_embedding_extension}
In this subsection, we aim to prove that any continuous function can be approximated by composing linear transformations and diffeomorphisms and to determine the required width $w(n,m)$ for approximation.  
Moreover, we will prove that the network-independently defined value $w(n,m)$ equals the minimum width of deep, narrow, Leaky-ReLU MLPs. 

Let $\mathrm{Emb}(X, Y)$ be the set of smooth embeddings from $X$ to $Y$.
Let $\mathrm{Emb}_{p.l.}(X, Y)$ be the set of smooth embeddings from $X$ to $Y$.
For natural numbers $d_1 \geq d_2$, let $p_{d_1,d_2}:\mathbb{R}^{d_1}\rightarrow \mathbb{R}^{d_2}$ be a projection to the first $d_2$ coordinates.
Define $w({n,m})$ as
\begin{equation}
    w({n,m}):=\operatorname{min}\left\{l\in \mathbb{N}_0\left| p_{l, m} \left( \overline{\mathrm{Emb}([0,1]^n, \mathbb{R}^{l})}\right) = C([0,1]^n, \mathbb{R}^m) \right.\right\}.
\end{equation}
Intuitively, $w(n,m)$ is the least width required to approximate an arbitrary continuous function with diffeomorphism.
\begin{remark}
    We remark that interval $[0,1]$ can be replaced with interval $[a, b]$ for $a <b$.
    And $\operatorname{Emb([0,1]^n, \mathbb{R}^l)}$ can be replaced with any dense subset of $ \overline{\mathrm{Emb}([0,1]^n, \mathbb{R}^{l})}$, such as $\mathrm{Emb}_{p.l.}([0,1]^n, \mathbb{R}^{l})$, the set of piecewise linear embedding from $[0,1]^n$ to $\mathbb{R}^{l}$ \citep{munkres1960obstructions_pl_to_diff}.
\end{remark}

We will prove that $w(n,m)$ has a similar value to $w(n,m, \sigma)$ and the same value to $w(n,m, \text{Leaky-ReLU})$. 
The next lemma demonstrates that any smooth embedding can be represented by composition of inclusion and smooth diffeomorphism.
\begin{lemma}[Theorem C of \cite{palais1960extending}]\label{lemma:diffeomorphism_extension}
     Consider natural numbers $n$ and $m$ where $n\leq m$,
     and an embedding $f:K = [0,1]^n\rightarrow\mathbb{R}^m$.
     Then, there exists a smooth diffeomorphism $F:\mathbb{R}^m\rightarrow\mathbb{R}^m$ such that the following equation holds:
     \begin{equation}
         F \circ q_{n,m} =f. 
     \end{equation}
\end{lemma}

Then, the following theorem is satisfied.
        \begin{theorem}\label{thm:w_approximable}
Let $\sigma$ be a continuous function satisfying Condition \ref{condition:activation}.
Then, $\narrow{\sigma}{n, m, w({n,m}) + \alpha(\sigma)}$ compactly approximates $C(\mathbb{R}^n, \mathbb{R}^m)$, where 
\begin{equation}
    \alpha(\sigma) = \begin{cases}
    0 &\text{ if } \sigma = \text{Leaky-ReLU} 
    \\ 1 & \text{ if } \sigma = \text{ReLU}
    \\ 2 & \text{ if } \sigma = \text{otherwise}
\end{cases}.
\end{equation}
 In other words, 
 \begin{equation}
     C(\mathbb{R}^n,\mathbb{R}^m) \prec \narrow{\sigma}{n,m, w({n,m}) + \alpha(\sigma)}.
 \end{equation}
\end{theorem}
\begin{proof}
Without loss of generality, assume that $K = [0,1]^n$. 
In other cases, we can continuously extend the function to a cube containing $K$ and rescale.
By the definition of $w(n,m)$, for arbitrary $f\in C([0,1]^n, \mathbb{R}^m)$ and $\epsilon>0$, there exists an embedding $g\in \operatorname{Emb}([0,1]^n,\mathbb{R}^{w(n,m)})$ such that 
\begin{equation}
    \|f - p_{w({n,m}), n}\circ g\|_{ \infty,[0,1]^n}<\epsilon.
\end{equation}
Because $w(n,m)\geq n$, by Lemma \ref{lemma:diffeomorphism_extension}, for $q_{n,w(n,m)}: (x_1,\dots, x_n)\mapsto (x_1, \dots, x_n, 0,\dots, 0)$, there exists a smooth diffeomorphism $G$ such that $g = G \circ q_{n,w(n,m)}$.
    By Theorem \ref{thm:nn_is_invertible_approximator}, there exists an $H\in \narrow{\sigma}{w(n,m),w(n,m),w(n,m)+\alpha(\sigma)}$ such that 
    \begin{equation}
        \|G-H\|_{\infty, K\times [0,1]^{w(n,m)-n}} <\epsilon.
    \end{equation}
    Then, 
    \begin{equation}
        \|p_{w({n,m}), n}\circ H \circ q_{n,w(n,m)} - p_{w({n,m}), n}\circ G \circ q_{n,w(n,m)}\|_{\infty, [0,1]^n}<\epsilon.
    \end{equation}
    Therefore,
    \begin{equation}
        \|f - p_{w(n,m),m}\circ H \circ q_{n,w(n,m)}\|_{\infty, K}<2\epsilon. 
    \end{equation}
     $p_{w(n,m),m}\circ H \circ q_{n,w(n,m)}\in \narrow{\sigma}{w(n,m),w(n,m),w(n,m)+\alpha(\sigma)}$, and it complete the proof.
\end{proof}

Furthermore, we can give the lower bound of the minimum width for the universal approximation.

\begin{proposition}\label{prop:w_not_approximable}
Let $\sigma$ be an increasing, continuous activation function.
    For $n,m\in \mathbb{N}$, $\narrow{\sigma}{n, m, w(n,m) -1}$ does not compactly approximate $C(\mathbb{R}^n, \mathbb{R}^m)$. 
 In other words, the following relation holds:
 \begin{equation}
     C(\mathbb{R}^n, \mathbb{R}^m) \nprec \narrow{\sigma}{n,m,  w(n,m) -1}.
 \end{equation}
\end{proposition}
\begin{proof}
    For an increasing continuous activation function $\sigma$, there exist smooth, strictly increasing activation functions $\sigma_n$ such that uniformly converge to $\sigma$.
    Therefore, $\narrow{\sigma}{d,d,d} \prec \bigcup_{n\in \mathbb{N}} \narrow{\sigma_n}{d,d,d} \prec\mathcal{D}^{\infty}(\mathbb{R}^d)$, and it is sufficient to consider only smooth, strictly increasing activation function $\sigma$.

    For $f\in \narrow{\sigma}{n,m,  w(n,m) -1}$, $f$ can be decomposed as
    \begin{equation}
        f = p_{w(n,m)-1,m} \circ g \circ q_{n, w(n,m)-1},
    \end{equation}
    where $g\in \narrow{\sigma}{w(n,m)-1,w(n,m)-1,w(n,m)-1}$.
    Because $\narrow{\sigma}{w(n,m)-1,w(n,m)-1,w(n,m)-1} \prec \mathcal{D}^{\infty}(\mathbb{R}^{w(n,m)-1})$, $\left. g \circ q_{n, w(n,m)-1}\right|_{[0,1]^n}\in \overline{\mathrm{Emb}([0,1]^n, \mathbb{R}^{w(n,m) -1})}$.
    Therefore, 
    \begin{equation}
        f|_{[0,1]^n}\in p_{w(n,m) -1, m} \left( \overline{\mathrm{Emb}([0,1]^n, \mathbb{R}^{w(n,m) -1})}\right),
    \end{equation}
   and because $f\in \narrow{\sigma}{n,m,  w(n,m) -1}$ is arbitrary, we have
   \begin{equation}
       \left.\narrow{\sigma}{n,m,  w(n,m) -1}\right|_{[0,1]^n}\subset p_{w(n,m) -1, m}\left( \overline{\mathrm{Emb}([0,1]^n, \mathbb{R}^{w(n,m) -1})}\right).
   \end{equation}
    Because $w(n,m) -1 < w(n,m)$, by the definition of $w(n,m)$, 
    \begin{equation}
        p_{w(n,m) -1, m} \left( \overline{\mathrm{Emb}([0,1]^n, \mathbb{R}^{w(n,m) -1})}\right) \nsupseteq C([0,1]^n, \mathbb{R}^m),
    \end{equation}
    and 
    \begin{equation}
        \narrow{\sigma}{n,m,  w(n,m) -1} \nsupseteq C([0,1]^n, \mathbb{R}^m).
    \end{equation}
    Thus, we can conclude that $C(\mathbb{R}^n, \mathbb{R}^m) \nprec \narrow{\sigma}{n,m,  w(n,m) -1}$.
\end{proof}
 By combining Theorem \ref{thm:w_approximable} and Proposition \ref{prop:w_not_approximable} together, we can conclude that the minimum width $w_{\operatorname{min}}(n,m,\text{Leaky-ReLU})$ is equal to $w(n,m)$ for Leaky-ReLU and can get tight inequality for general increasing activation functions.
\begin{corollary}\label{cor:lower_upper}
  The following equation holds:
   \begin{equation}
       w_{\operatorname{min}}(n,m,\text{Leaky-ReLU}) = w(n,m)
   \end{equation}
    For a general increasing activation function $\sigma$, which satisfies Condition \ref{condition:activation}, the following inequality holds:
   \begin{equation}
       w(n,m)\leq w_{\operatorname{min}}(n,m,\sigma)\leq w(n,m) + \alpha(\sigma),
   \end{equation}
where 
\begin{equation}
    \alpha(\sigma) = \begin{cases}
     1 & \text{ if } \sigma = \text{ReLU}
    \\ 2 & \text{ if } \sigma = \text{otherwise}
\end{cases}.
\end{equation}
\end{corollary}

\subsection{Some Observation about Upper bound of $w(n,m)$}\label{subsec:upper_bound}
In the previous subsection, we demonstrated that the minimum width of the deep, narrow MLP is fundamentally correlated with $w(n,m)$.
In this subsection, we will present the sufficient condition for the $w(n,m)$ to be equal to $m$.  
The following lemma demonstrates that a continuous function can be approximated by a smooth embedding when the output dimension is larger than twice the input dimension:
\begin{lemma}\label{lemma:whitney_lemma}
 Consider natural numbers $n$ and $m$ where $m>2n$.
    Let $f: K = [0,1]^n\subset \mathbb{R}^n \rightarrow \mathbb{R}^m$ be a continuous function.
    Then, for $\epsilon \in \mathbb{R}_{+}$, there exists a smooth embedding $g:K\rightarrow\mathbb{R}^m$ such that
    \begin{equation}
        \|f-g\|_{\infty, K}<\epsilon.
    \end{equation}
\end{lemma}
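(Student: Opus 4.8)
The statement is a quantitative (approximation) version of the Whitney embedding theorem, and it works here precisely because the codimension condition $m>2n$ is exactly what forces a \emph{generic} smooth map $K\to\mathbb R^m$ to be an injective immersion. The plan has four steps. First, reduce to the smooth case: extend $f$ to a continuous map on an open neighborhood $U\supset K$ (Tietze), mollify to get a smooth $g_0\in C^\infty(U,\mathbb R^m)$ with $\|f-g_0\|_{\infty,K}<\epsilon/2$ (equivalently, take $g_0$ a polynomial map via Stone--Weierstrass). It then suffices to produce a smooth embedding $g$ of $K$ with $\|g-g_0\|_{\infty,K}<\epsilon/2$, and for this I look for $g$ of the form $g(x)=g_0(x)+Lx$ with $L\in M_{m,n}$ of small operator norm.

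Second, arrange that $g$ is an immersion on $K$. Here $Dg(x)=Dg_0(x)+L$, and this fails to be injective at $x$ exactly when $L\in \Sigma-Dg_0(x)$, where $\Sigma\subset M_{m,n}$ is the set of matrices of rank $\le n-1$. Stratifying $\Sigma$ by rank, each stratum $\Sigma_k$ (rank exactly $k$, $k\le n-1$) is a smooth manifold of dimension $k(m+n-k)\le(n-1)(m+1)$, so the image of the smooth map $(x,A)\mapsto A-Dg_0(x)$ on $K\times\Sigma_k$ has dimension at most $n+(n-1)(m+1)=nm+2n-m-1<nm$ (using $m>2n-1$), hence Lebesgue measure zero in $M_{m,n}\cong\mathbb R^{nm}$; the finite union over $k\le n-1$ is still null. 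Therefore an arbitrarily small generic $L$ makes $Dg_0(x)+L$ of full rank $n$ for every $x\in K$.

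Third, arrange simultaneously that $g$ is injective. On the off-diagonal manifold $\widetilde K:=(K\times K)\setminus\Delta$, which has dimension $2n$, consider for each $(x,y)\in\widetilde K$ the set of matrices $L$ with $g_0(x)+Lx=g_0(y)+Ly$, i.e.\ $L(x-y)=g_0(y)-g_0(x)$; since $L\mapsto L(x-y)$ is a surjection onto $\mathbb R^m$ for $x\ne y$, this is a nonempty affine subspace of $M_{m,n}$ of codimension $m$. Thus the incidence set $\{(x,y,L):(x,y)\in\widetilde K,\ L(x-y)=g_0(y)-g_0(x)\}$ is a manifold of dimension $2n+(nm-m)<nm$ (using $m>2n$), so its projection to $M_{m,n}$ is null. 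Combining with the previous step, a generic $L$ of arbitrarily small norm makes $g:=g_0+L(\cdot)$ a smooth injective immersion of $K$; since each bad set is null, such $L$ can be found inside every ball about $0$. Finally, $K$ is compact, so an injective immersion of $K$ is a homeomorphism onto its image and hence a smooth embedding, and $\|f-g\|_{\infty,K}\le\|f-g_0\|_{\infty,K}+\|L\|\cdot\sup_{x\in K}\|x\|<\epsilon$ once $\|L\|$ is small enough.

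The main obstacle is the transversality bookkeeping in Steps 2 and 3: one must verify that the ``bad'' parameter sets are genuinely null — handling the stratified singular matrix locus $\Sigma$, and noting that $\widetilde K$, though noncompact, is a second-countable manifold so the image of a lower-dimensional smooth map into $M_{m,n}$ is still measure zero — and one must be careful that injectivity cannot be enforced directly on the closure of $\widetilde K$: pairs near the diagonal are controlled only indirectly, through the immersion property (which gives local injectivity on the compact $K$), while the transversality argument handles pairs bounded away from the diagonal. A minor but necessary point is arranging in Step 1 that $g_0$ is defined on an open neighborhood of the closed cube $K$, so that ``$Dg_0$'' and ``immersion'' make sense there; the Tietze-plus-mollification construction provides this.
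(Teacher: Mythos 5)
Your proposal is correct, but it takes a more self-contained route than the paper. The paper's own proof also begins by extending $f$ continuously to an open neighborhood $U\supset K$ (Tietze), but then it simply invokes Theorems 3.17 and 3.18 of \citet{persson2014whitney}, which assert that a continuous map from an $n$-manifold into $\mathbb{R}^m$ with $m>2n$ can be uniformly approximated by injective immersions, and finally uses compactness of $K$ to upgrade the injective immersion to an embedding. You instead prove exactly the outsourced step by hand: mollify (or use Stone--Weierstrass) to get a smooth $g_0$ near $K$, then perturb by a generic small linear map $L$, with two dimension-counting/easy-Sard arguments showing that the bad sets of $L$ (those for which $Dg_0+L$ drops rank somewhere on $K$, via the rank stratification of the singular locus with $\dim\Sigma_k=k(m+n-k)$, and those for which $g_0+L$ identifies two distinct points, via the incidence manifold of dimension $2n+nm-m$) are Lebesgue-null in $M_{m,n}$, both counts using precisely $m>2n$; your numerology checks out, and the final compactness and error estimates are as in the paper. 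What your approach buys is independence from the cited reference and an explicit exhibition of where the codimension hypothesis enters; what the paper's approach buys is brevity. One small remark: your caveat that injectivity near the diagonal is ``controlled only indirectly, through the immersion property'' is unnecessary for your own argument --- the incidence set you project ranges over \emph{all} off-diagonal pairs, however close to the diagonal, so a generic $L$ already gives injectivity on all of $K$ outright; the immersion step is needed only to make $g$ an immersion, not to rescue injectivity. (Also, to have honest manifolds, run both counting arguments over the open neighborhood $U$, resp.\ $(U\times U)\setminus\Delta$, rather than over the cube $K$ with its corners; you essentially note this already.)
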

\begin{proof}
    Consider a connected, open subset $U$ of $\mathbb{R}^n$ such that $K\subset U\subset\mathbb{R}^n$.
    Since $K$ is compact, there exists a continuous extension $f_0$ of $f$ such that 
    \begin{equation}
        f_0|_{K} = f.
    \end{equation}
    As $U$ is a manifold, the assumptions of Theorem 3.17 and 3.18 of \cite{persson2014whitney} are satisfied.
    Therefore, there exists an injective immersion $g$ such that 
    \begin{equation}
        \|f-g\|_{\infty, U} < \epsilon.
    \end{equation}
    Consequently,  the restriction $g|_{K}$ defined on the compact set $K$ becomes a smooth embedding.
\end{proof}

\begin{theorem}\label{thm:main_thm}
Let $\sigma$ be a continuous function that satisfies Condition \ref{condition:activation}.
Then, for any natural numbers $n,m\in \mathbb{N}$, the set $\narrow{\sigma}{n, m, \operatorname{max}(2n+1, m) + \alpha(\sigma)}$ compactly approximates $C(\mathbb{R}^n, \mathbb{R}^m)$, where 
\begin{equation}
    \alpha(\sigma) = \begin{cases}
    0 &\text{ if } \sigma = \text{Leaky-ReLU} 
    \\ 1 & \text{ if } \sigma = \text{ReLU}
    \\ 2 & \text{ if } \sigma = \text{otherwise}
\end{cases}.
\end{equation}
 In other words, we have the relation
 \begin{equation}
     C(\mathbb{R}^n,\mathbb{R}^m) \prec \narrow{\sigma}{n,m, \operatorname{max}(2n+1, m) + \alpha(\sigma)}.
 \end{equation}
\end{theorem}
\begin{proof}
    Lemma \ref{lemma:whitney_lemma} implies that $w(n,m)\leq \operatorname{max}(2n+1, m)$. 
By Theorem \ref{thm:w_approximable}, we can immediately get the conclusion. 
\end{proof}
\begin{remark}
    As previously mentioned in the introduction, \citet{kim_park2023minimum} demonstrated that the minimum width $w_{\min}(d_x,d_y, \sigma)$ satisfies the relation $w_{\min} \geq d_y + \boldsymbol{1}_{ d_x<d_y \leq 2d_x}$ for an increasing activation function.
    It indicates that when the output dimension $d_y$ is twice the input dimension $d_x$, and the activation function is Leaky-ReLU, $w_{\min}(d_x, 2d_x, \sigma)$ is equal to or larger than $d_y+1$. 
   In the same configuration, according to Theorem \ref{thm:main_thm}, we can get the following relation: $w_{\min} \leq 2d_x+1 = d_y+1$.
     By combining these two results, we arrive at the optimal minimum width $w_{\min} = d_y+1 = 2d_x +1$.
\end{remark}

Besides the above relation, there is an obvious upper bound for $w(n,m)$:
\begin{equation}
    w(n,m)\leq n+m,
\end{equation}
for all $n,m\in \mathbb{N}$.
It reproofs the result of \citet{hanin2017approximating} for the Leaky-ReLU case:
\begin{equation}
    w_{min}(n,m ,\text{Leaky-ReLU})\leq n+m,
\end{equation}
and slightly worse results for ReLU \citep{hanin2017approximating} and other general activation functions \citet{kidger2020universal}:
\begin{equation}
     w_{min}(n,m ,\text{Leaky-ReLU})\leq n+m + \alpha(\sigma),
\end{equation}
where $\alpha(\sigma) =1$ for $\sigma = \text{ReLU}$, and $\alpha(\sigma) =2$ for other activation functions.


\subsection{Lower Bound of $w(n,m)$}\label{subsec:lower_bound}
In this subsection, we provide a nontrivial example of minimum width using the concept of $w(n,m)$.
In particular, we will prove that $w(2,2) = 4$ using some algebraic topological techniques.

We use the following lemma, which implies that the homology of the level set of a function is robust to the perturbation. 
\begin{lemma}[Theorem 2 of \citet{bendich2010robustnesslevelset}]\label{lemma:robustlevelset}
Let $\mathbb{X}$ be a compact topological space. For a continuous function $f:\mathbb{X}\rightarrow \mathbb{R}$ and $f_a:\mathbb{X}\rightarrow \mathbb{R}$ defined as $f_a(x):=|f(x)-a|$, define $\mathbb{X}_r$ as follows:
 \begin{equation}
     \mathbb{X}_r\left(f_a\right)=f_a^{-1}[0, r] 
 \end{equation}
For $h:\mathbb{X}\rightarrow \mathbb{R} $, such that $\|f-h\|_{\infty ,\mathbb{X}}< r$, $h^{-1}(a)$ is included in $\mathbb{X}_r\left(f_a\right)$:
\begin{equation}
   h^{-1}(a) \hookrightarrow \mathbb{X}_r\left(f_a\right),
\end{equation}
and the inclusion induces the homomorphism of the homology:
\begin{equation}
    \mathrm{j}_h: \mathrm{H}_n\left(h^{-1}(a)\right) \rightarrow \mathrm{H}_n\left(\mathbb{X}_r\left(f_a\right)\right)
\end{equation}
In addition, as $f^{-1}(a-r)$ and $f^{-1}(a+r)$ are also included in $\mathbb{X}_r\left(f_a\right)$, the inclusion
\begin{equation}
    \iota^0: f^{-1}(a+r) \hookrightarrow \mathbb{X}_r\left(f_a\right),
\end{equation}
and
\begin{equation}
    \iota^1: f^{-1}(a-r) \hookrightarrow \mathbb{X}_r\left(f_a\right),
\end{equation}
induce the homomorphisms $\iota^0_*$ and $\iota^1_*$ of the homology:
\begin{equation}
   \iota^0_*: \mathrm{H}_n\left(f^{-1}(a+r)\right) \rightarrow \mathrm{H}_n\left(\mathbb{X}_r\left(f_a\right)\right),
\end{equation}
and
\begin{equation}
   \iota^1_*: \mathrm{H}_n\left(f^{-1}(a-r)\right) \rightarrow \mathrm{H}_n\left(\mathbb{X}_r\left(f_a\right)\right).
\end{equation}
\\Define $\mathrm{B}_{0, r}$ and $\mathrm{B}_{1, r}$ as the images of two homomorphisms:
$\mathrm{B}_{0, r}:= \iota^0_*\left(\mathrm{H}_n\left(f^{-1}(a+r)\right)\right) $ and $\mathrm{B}_{1, r}:= \iota^1_*\left(\mathrm{H}_n\left(f^{-1}(a-r)\right)\right) $.
Define $\mathrm{U}_n(r)$ as
\begin{equation}
    \mathrm{U}_n(r)=\bigcap_{\|h-f\|_{\infty, \mathbb{X}} \leq r} \mathrm{im}(j_h).
\end{equation}
Then, the following equation holds:
    \begin{equation}
        \mathrm{U}_n(r)=\mathrm{B}_{0, r} \cap \mathrm{B}_{1, r}.
    \end{equation}
\end{lemma}
We employ the well-known theorem as a lemma.
\begin{lemma}[Hurewicz Theorem (Theorem 2A.1 of \cite{hatcher2000algebraic})]\label{lemma:Hurewicz}
   By regarding loops as singular $1$-cycles, we obtain a homomorphism $h: \pi_1\left(X, x_0\right) \rightarrow \mathrm{H}_1(X)$. If $X$ is path-connected, then $h$ is surjective and has kernel the commutator subgroup of $\pi_1(X)$, so $h$ induces an isomorphism from the abelianization of $\pi_1(X)$ onto $\mathrm{H}_1(X)$.
\end{lemma}

\begin{definition}[Winding Number]
    For a closed curve $c:[0,1] \rightarrow \mathbb{R}^2 - O $, consider $c$ as the element of the fundamental group:
    \begin{equation}
        [c]\in  \pi_1(\mathbb{R}^2 - O, x_0 ) = \mathbb{Z},
    \end{equation}
    where the fundamental group $\pi_1(\mathbb{R}^2 - O, x_0 )$ is generated by the curve $\omega_1 = (\cos (2\pi\theta), \sin(2\pi \theta))$. 
Then, a winding number of $c$ is the natural number $[c]$ as an element of $\pi_1(\mathbb{R}^2 - O, x_0 ) = \mathbb{Z}$.
\end{definition}

\begin{lemma}\label{lemma:winding_number_injective}
    For any closed curve $c: S^1\rightarrow \{(x,y)\in \mathbb{R}^2|  1< x^2+ y^2<2 \}$ in the annulus with a winding number larger than $1$, $c$ is not injective. 
\end{lemma}
\begin{proof}
    Assume that $c$ is an injective curve.
    By the Jordan Curve Theorem (See Proposition 2B.1 of \cite{hatcher2000algebraic} for details), an injective curve bounds a region homeomorphic to the disk.
    Therefore, there exists an embedding $C: D^2 \rightarrow \mathbb{R}^2$ such that its restriction to the boundary is equal to $c$.
\begin{equation}
    C|_{S^1} = c.
\end{equation}
    Because $S^1\hookrightarrow D^2 - \{O\}$ induces an isomorphism of the fundamental group, the degree should be $1$ or $-1$.
    Therefore, a curve with a winding number larger than $1$ is not injective.  
\end{proof}

Using the lemmas, we prove the following theorem.
\begin{theorem}\label{thm:224}
    $w(2,2) = 4$.
\end{theorem}
Proof of Theorem \ref{thm:224} is provided in Appendix \ref{Appendix:proof_thm_224}.
\begin{corollary}
    \begin{equation}
        w_{min}(2,2, \text{ReLU}) = w_{min}(2,2, \text{Leaky-ReLU}) = 4.
    \end{equation}
\end{corollary}
\begin{proof}
    The lower bound $w_{min}(2,2, \text{ReLU})= w(2,2) \geq 4$ is the exact consequence of Theorem \ref{thm:224} and Corollary \ref{cor:lower_upper}.
    The upper bound is by \citet{hanin2017approximating}.
\end{proof}

\section{Conclusion}\label{sec:conclusion}

In this paper, we have introduced a novel upper bound for the minimum width of a deep, narrow MLP necessary for achieving universal approximation within continuous function spaces.
While our derived upper bound demonstrates optimality under specific conditions, we propose that the approach of approximating arbitrary functions through diffeomorphisms could lead to achieving optimality across all cases.
Investigating this perspective presents an intriguing avenue for future research.
Furthermore, we anticipate that analyzing the quantitative approximation capacity of general MLPs from the standpoint of diffeomorphisms may yield valuable insights.

\section*{Acknowldegement}
This work was supported by a KIAS Individual Grant [AP092801] via the Center for AI and Natural Sciences at Korea Institute for Advanced Study. 
\appendix
\section{Proofs}
\subsection{Proof of Lemma \ref{lemma:increasing_with_leaky}}\label{Proof of Lemma_increasing_with_leaky}
\begin{proof}
        Because increasing piecewise linear functions are dense in the space of increasing continuous functions defined on a compact interval, it is sufficient to prove that for an arbitrary natural number $n\in \mathbb{N}$ and an increasing piecewise linear function $f$ with $n$ breakpoints, we have $f\in U_{n}$.
        We will use mathematical induction on $n$. 
        For $n=0$, there is nothing to prove.
        Now, assume that the induction hypothesis is satisfied for some $n=n_0$. 
        Consider the case of $n=n_0+1$, where we have an increasing piecewise linear function $f$ with $n_0+1$ breakpoints, denoted as $\alpha_1 <\alpha_2 <\dots < \alpha_{n_0+1}$.
        The function $f$ is affine on each of the intervals $(-\infty, \alpha_1], [\alpha_1,\alpha_2], \dots, [\alpha_{n_0}, \alpha_{n_0+1}], [\alpha_{n_0+1}, \infty)$.  
        Now, let $f$ has values as follows: 
        \begin{equation}
            f(x)= \begin{cases}
               f(\alpha_{n_0+1}) + \gamma_1 (x - \alpha_{n_0+1})  &\text{ if } x\in [\alpha_{n_0}, \alpha_{n_0+1}]
                \\ f(\alpha_{n_0+1}) + \gamma_2 (x - \alpha_{n_0+1}) &\text{ if } x \in [\alpha_{n_0+1}, \infty)
            \end{cases}.
        \end{equation}
        Consider the function $f_0$ defined as follows:
        \begin{equation}
            f_0(x):= 
            \begin{cases}
                f(x) &\text{ if } x \in (-\infty, \alpha_{n_0+1}]
                 \\ f(\alpha_{n_0+1}) + \gamma_1 (x - \alpha_{n_0+1})&\text{ if } x\in [\alpha_{n_0+1}, \infty)
            \end{cases}.
        \end{equation}
      The function $f_0$ coincides with $f$ on the interval $(-\infty, \alpha_{n_0+1}]$ and is affine on the interval $[\alpha_{n_0}, \infty)$.
        This means that the affine function on the interval $[\alpha_{n_0}, \alpha_{n_0+1}]$ naturally extends to the interval $[\alpha_{n_0}+1, \infty)$ with the same slope.
        Therefore, $f_0$ has $n_0$ breakpoints, and by the induction hypothesis, $f_0\in U_{n_0}$.
        We can express $f$ in terms of $f_0$ as follows:
        \begin{equation}
            f(x) = \frac{\gamma_2}{\gamma_1}\lr{\frac{\gamma_1}{\gamma_2}} \left(f_0(x) - f(\alpha_{n_0+1}) \right)+ f(\alpha_{n_0+1}).
        \end{equation}
        Then, $f\in U_{n_0+1}$, and the induction hypothesis is satisfied for $n=n_0+1$.
        It completes the proof.
    \end{proof}

\subsection{Proof of Lemma \ref{lemma:approximating_acf}}\label{proof_of_lemma_approximating_acf}
\begin{proof}
    For $\beta\in \mathbb{R}_{+}$, $a,c\in \mathbb{R}$ and $b\in \mathbb{R}^{d-1}$, we define the function $g$ as follows:
    \begin{equation}
        g:  (x_1,x_2,\dots, x_d) \mapsto (x_1, x_2, \dots, x_{d-1}, x_d + a\lr{\beta}(b\cdot x_{1:d-1}+c)).
    \end{equation}
    Then, we will prove that $g\prec \narrow{\lr{}}{d,d,d}$.
    If $b$ is the zero vector, $g$ is a constant adding function, and the statement is satisfied.
    If $b$ is not the zero vector, for $b = (b_1, \dots, b_{d-1})$, there exists an index $1\leq i\leq d-1$ such that $b_i\neq 0$.
    We can define an invertible affine transformation $W\in \operatorname{IAff}_{d}$ as follows:
    \begin{equation}
        W: (x_1,x_2,\dots, x_d) \mapsto (x_1, x_2, \dots, x_{i-1}, b\cdot x_{1:d-1}+c, x_{i+1},\dots, x_d).
    \end{equation}
    Because $b_i$ is nonzero, $W$ is invertible.
    Then, by applying $\lr{\beta}$ to the $i$-th component, we get 
\begin{equation}
    (x_1,\dots, x_{i-1}, \lr{\beta}(b\cdot x_{1:d-1}+c),x_{i+1},\dots, x_d) \prec \narrow{\lr{}}{d,d,d}.
\end{equation}
By adding $a$ times the $i$-th component to the last component, we have
\begin{equation}
    (x_1,\dots, \lr{\beta}(b\cdot x_{1:d-1}+c),\dots, x_d + a\lr{\beta}(b\cdot x_{1:d-1})+c) \prec \narrow{\lr{}}{d,d,d}.
\end{equation}
By applying $\lr{\frac{1}{\beta}}$ to the $i$-th component and applying $W^{-1}$, we get
\begin{equation}\label{eq:add}
    (x_1, \dots, x_{d-1}, x_d + a\lr{\beta}(b\cdot x_{1:d-1}+c))\prec \narrow{\lr{}}{d,d,d}.
\end{equation}

Next, we will prove that for the function $h$ defined as
    \begin{equation}\label{eq:add_arbitrary}
        h: (x_1,x_2,\dots, x_d) \mapsto (x_1, x_2, \dots, x_{d-1}, x_d + t(x_1,\dots, x_{d-1})),
    \end{equation}
    $h\prec \narrow{\lr{}}{d,d,d}$.
    By the UAP of two-layered neural networks \citep{leshno1993multilayer}, for arbitrary $\epsilon>0$ and a compact set $K\subset \mathbb{R}^{d-1}$, there exist $\beta\in \mathbb{R_{+}}$, $a_i,c_i\in \mathbb{R}$, and $b_i\in \mathbb{R}^{d-1}$ such that the following inequality holds:
    \begin{equation}
        \left\|t(x_{1:d-1}) - \sum_{i=1}^n a_i\lr{\beta}(b_i\cdot x_{1:d-1}+c_i) \right\|_{\infty, K}<\epsilon.
    \end{equation}
By composing Eq (\ref{eq:add}) for $n$ different $a_i, b_i$, and $c_i$, we obtain
\begin{equation}
    \left(x_1, \dots, x_{d-1}, x_d + \sum_{i=1}^n a_i\lr{\beta}\left(b_i\cdot x_{1:d-1}+c_i\right) \right)\prec\narrow{\lr{}}{d,d,d}.
\end{equation}
Thus, $h\prec \narrow{\lr{}}{d,d,d}$.

Finally, by compositing operations described so far, we demonstrate that any ACF can be approximated by $\narrow{\lr{}}{d,d,d}$.
It is achieved by combining the following four operations:
\begin{itemize}
    \item Apply the logarithm to the last component.
    \item Add $\operatorname{log}(s(x_1,\dots, x_{d-1}))$ to the last component.
    \item Apply the exponential function to the last component.
    \item Add $t(x_1,\dots, x_{d-1})$ to the last component.
\end{itemize}
This results in the following transformation:
\begin{multline}
        (x_1,x_2,\dots, x_d) \mapsto (x_1, x_2, \dots, x_{d-1}, \operatorname{exp}\left(\operatorname{log}(x_d ) + \operatorname{log}(s)\right) + t )
        \\ =(x_1, x_2, \dots, x_{d-1},s x_d + t)        \prec\narrow{\lr{}}{d,d,d}.
\end{multline}
It completes the proof.
\end{proof}

\subsection{Proof of Lemma \ref{lemma:leaky_ReLU_technical}}
 \begin{proof}\label{proof_leaky_ReLU_technial}
We begin by observing that it is sufficient to consider only those functions $b$ that satisfy $b(x_{1:d-1})\geq 1$ for all $x\in K$.
Let's define $\beta$ as $\beta:=\operatorname{inf}_{x\in K}b(x_{1:d-1})$.
We introduce a function $\widetilde{F}(x):= (x_1, \dots, x_{d-1}, \widetilde{f}(x))$ defined as follows:
\begin{equation}
    \widetilde{f}(x):=\beta\lr{\frac{1}{\beta}}\left(f(x))\right).
\end{equation}
If $F\in  \overline{\left.\narrow{\lr{}}{d,d,d}\right|_{K}}$, then $\widetilde{F}\in  \overline{\left.\narrow{\lr{}}{d,d,d}\right|_{K}}$.
The value of $\widetilde{f}(x)$ can be calculated as:
\begin{equation}
    \widetilde{f}(x) = \begin{cases}
        f(x) &\text{ if } x_d\leq \alpha_1 \\
        \beta f(x) &\text{ if } x_d> \alpha_1
    \end{cases}.
\end{equation}
This ratio $\frac{g(x)}{\widetilde{f}(x)} = \frac{b(x_{1:d-1})}{\beta}\geq 1$ for all $x\in K$, and $\widetilde{F}$ also satisfied all the assumptions of the lemma.
Therefore, we only need to consider functions $b$ that satisfy $b\geq 1$.

Next, we will inductively construct a sequence $\left\{G_i = (x_1,\dots, x_{d-1}, g_i(x))\right\}_{i=1}^{\infty} \subset \overline{\left.\narrow{\lr{}}{d,d,d}\right|_{K}}$ that uniformly converges to $G$ when $x_d = \alpha_2$.
We start with $g_0(x):= f(x)$.
Define $b_i:\mathbb{R}^{d-1}\rightarrow\mathbb{R}$ as 
\begin{equation}
    b_i(x_{1:d-1}):= \frac{g_i(x_{1:d-1},\alpha_2)}{f(x_{1:d-1},\alpha_2)},
\end{equation}
for all $x\in K$.
Define $\gamma_i\in \mathbb{R}$ as follows:
\begin{equation}
   \gamma_i:= \operatorname{sup}\left\{\left.\frac{b(x_{1:d-1})}{b_i(x_{1:d-1})}\right| x\in K\right\}.
\end{equation}
Now, define two mutually exclusive sets, $L_{i, 0}$ and $L_{i, 1}$:
\begin{equation}
    L_{i, 0} = \left\{ \left. x_{1:d-1}\in [0,1]^{d-1}\right|1\leq \frac{b(x_{1:d-1})}{b_i(x_{1:d-1})}\leq \gamma_i^{\frac{1}{3}}\right\}.
\end{equation}
\begin{equation}
    L_{i, 1} = \left\{ \left. x_{1:d-1}\in [0,1]^{d-1}\right| \gamma_i^{\frac{2}{3}}\leq \frac{b(x_{1:d-1})}{b_i(x_{1:d-1})}\leq \gamma_i \right\}.
\end{equation}
Define a distance metric $D$ as:
\begin{equation}
    D(x, C): = \operatorname{inf}_{y\in C}\|x-y\|_2,
\end{equation}
and then define $ \phi_i:\mathbb{R}^{d-1}\rightarrow\mathbb{R} $ as:
\begin{equation}
    \phi_i(x):= \frac{D(x,L_{i, 0})}{D(x,L_{i, 0}) +D(x,L_{i, 1})}.
\end{equation}
The function $\phi_i$ satisfies the inequality $0\leq \phi_i(x_{1:d-1})\leq 1$ for all $x\in K$, has value zero on $L_{i,0}$, and has value one on $L_{i,1}$.
Define $h_i:\mathbb{R}^{d-1}\rightarrow\mathbb{R}$ as follows:
\begin{equation}
    h_i(x_{1:d-1}):= (1-\phi_i(x_{1:d-1}))g_i(x_{1:d-1}, \alpha_2)
\end{equation}
Then, $0\leq h_i(x_{1:d-1})\leq g_i(x_{1:d-1}, \alpha_2)$ for all $x\in K$, has a value of zero on $L_{i,1}$ and has a value of $g_i(x_{1:d-1}, \alpha_2)$ on $L_{i,0}$.
\\Now, we define $g_{i+1}(x)\in  \overline{\left.\narrow{\lr{}}{d,d,d}\right|_{K}}$ as follows:
\begin{equation}
    g_{i+1}(x):= \gamma_i^{\frac{1}{3}} \lr{\gamma_i^{-\frac{1}{3}}}(g_i(x) - h_i(x_{1:d-1})) + h_i(x_{1:d-1}).
\end{equation}
We have 
    \begin{equation}
        g_{i+1}(x)  
        \begin{cases}
             = g_i(x) = 0 &\text{ if } x_d\leq \alpha_1 
            \\=g_i(x)  &\text{ if } x_d = \alpha_2 \text{ and } x_{1:d-1}\in L_{i,0}
            \\=\gamma_i^{\frac{1}{3}} g_i(x)   &\text{ if } x_d = \alpha_2 \text{ and } x_{1:d-1}\in L_{i,1}
            \\ \leq \gamma_i^{\frac{1}{3}} g_i(x) &\text{ if } x_d = \alpha_2 \text{ and } x_{1:d-1}\notin L_{i,0}\cup L_{i,1}
        \end{cases}.
    \end{equation}
Thus, for $x$ that $x_d=\alpha_2$ and $x_{1:d-1}\in L_{i,0}$, we have
\begin{equation}
    \frac{g(x)}{g_{i+1}(x)} = \frac{g(x)}{g_{i}(x)} = \frac{b(x_{1:d-1})}{b_i(x_{1:d-1})}.
\end{equation}
As $1 \leq \frac{b(x_{1:d-1})}{b_i(x_{1:d-1})}\leq \gamma_i^{\frac{1}{3}}$ for $x_{1:d-1}\in L_{i,0}$, we get $1 \leq  \frac{g(x)}{g_{i+1}(x)}\leq \gamma_i^{\frac{1}{3}}$.

For $x$ that $x_d=\alpha_2$ and $x_{1:d-1}\in L_{i,1}$,
\begin{equation}
    \frac{g(x)}{g_{i+1}(x)} = \frac{g(x)}{ \gamma_i^{\frac{1}{3}} g_{i}(x)} = \gamma_i^{-\frac{1}{3}}\frac{b(x_{1:d-1})}{b_i(x_{1:d-1})} .
\end{equation}
As $\gamma_i^{\frac{2}{3}} \leq \frac{b(x_{1:d-1})}{b_i(x_{1:d-1})}\leq \gamma_i$ for $x_{1:d-1}\in L_{i,1}$, we get $1\leq \gamma_i^{\frac{1}{3}} \leq  \frac{g(x)}{g_{i+1}(x)}\leq \gamma_i^{\frac{2}{3}}$.

For $x$ that $x_d=\alpha_2$ and $x_{1:d-1}\notin L_{i,0}\cup L_{i,1}$,
\begin{equation}
  \gamma_i^{-\frac{1}{3}} \frac{b(x_{1:d-1})}{b_i(x_{1:d-1})} = \frac{g(x)}{ \gamma_i^{\frac{1}{3}} g_{i}(x)} \leq \frac{g(x)}{g_{i+1}(x)} \leq   \frac{g(x)}{g_{i}(x)} = \frac{b(x_{1:d-1})}{b_i(x_{1:d-1})} .
\end{equation}
As $\gamma_i^{\frac{1}{3}} \leq \frac{b(x_{1:d-1})}{b_i(x_{1:d-1})}\leq \gamma_i^{\frac{2}{3}}$ for $x_{1:d-1}\notin L_{i,0}\cup L_{i,1}$, we get $ 1 \leq  \frac{g(x)}{g_{i+1}(x)}\leq \gamma_i^{\frac{2}{3}}$.

We obtain the following results: for all $x\in K$, where $x_d = \alpha_2$, we have $1 \leq \frac{g(x)}{g_{i+1}(x)} = \frac{b(x_{1:d-1})}{b_{i+1}(x_{1:d-1})}\leq \gamma_i^{\frac{2}{3}}$. This implies $1\leq \gamma_{i+1}\leq \gamma_{i}^{\frac{2}{3}}$.
Consequently, as $i$ tends towards infinity, $\gamma_i$ converges to one.
Therefore, $\frac{g(x_{1:d-1}, \alpha_2)}{g_{i+1}(x_{1:d-1},\alpha_2)}$ uniformly converges to one as $i$ increases, implying that $G_i$ converges to $G$.
As a result, there exists a function$G = (x_1, \dots, x_{d-1},g(x_{1:d}))\in  \overline{\left.\narrow{\lr{}}{d,d,d}\right|_{K}}$ such that $g(x_{1:d-1},\alpha_2) = b(x_{1:d-1})f(x_{1:d-1},\alpha_2)$.

To check that $G$ is a single-coordinate transformation, we can observe that:
\begin{equation}
    g_{i+1}(x_{1:d-1}, x_d) - g_{i+1}(x_{1:d-1}, x'_d) > g_{i}(x_{1:d-1}, x_d) - g_{i}(x_{1:d-1}, x'_d),
\end{equation} 
for $x_d > x'_{d}$, which implies that $g(x_{1:d-1}, x_d) - g(x_{1:d-1}, x'_d)> g_0(x_{1:d-1}, x_d) - g_0(x_{1:d-1}, x'_d)>0$ for all $x\in K$.
Therefore, $g$ satisfies the strictly increasing condition, and $G$ becomes a single-coordinate transformation.

    \end{proof}

\subsection{Proof of Theorem \ref{thm:224}}\label{Appendix:proof_thm_224}
\begin{proof}
    It is obvious that $w(2,2) \leq 4 = 2+2$. 
    Therefore, it is sufficient to prove that $w(2,2)\geq 4$.
    Assume that $w(2,2)\leq 3$.
    Then, for an arbitrary continuous function $f$ in $C([-2,2]^2, \mathbb{R}^2)$, $f$ is contained in $p_{3,2}\circ \overline{\mathrm{Emb}([-2,2]^2, \mathbb{R}^3)} = p_{3,2}\circ \overline{\mathrm{Emb}_{p.l.}([-2,2]^2, \mathbb{R}^3)}$.
Consider a piecewise linear map $f: [-2,2]^2\rightarrow\mathbb{R}^2$ defined as follows:
\begin{equation}
    f(x_1,x_2): = \begin{cases}
         \begin{pmatrix}
             1& -1\\ 0& 2
         \end{pmatrix} \begin{pmatrix}
             x_1 \\ x_2 
         \end{pmatrix} & \text{ if } 0 \leq x_2 \leq  x_1  \\ \\
                  \begin{pmatrix}
             1& -1\\ 2& 0
         \end{pmatrix} \begin{pmatrix}
             x_1 \\ x_2 
         \end{pmatrix} & \text{ if } 0 \leq x_1 \leq  x_2 \\
           - f(x_2, -x_1) & \text{ if } x_1 \leq 0  \text { and }  0 \leq x_2\\
            f( - x_1, -x_2) & \text{ if } x_2 \leq 0
    \end{cases} .
\end{equation}
We can check that $f$ is the piecewise linear double-winding function. 
By the assumption, there exists a piecewise linear embedding $G\in \operatorname{Emb}_{p.l.}([-2,2]^2, \mathbb{R}^3)$ such that 
\begin{equation}
  \| f-  p_{3,2}\circ G\|_{\infty, [-2,2]^2} <\frac{1}{4}.  
\end{equation}

Let $\Sigma :\mathbb{R}^2 \rightarrow \mathbb{R}$ be defined as 
\begin{equation}
    \Sigma: (x_1, x_2)\mapsto |x_1| +|x_2|.
\end{equation}
We can observe that $f$ conserves the level of $\Sigma$: that is, $(\Sigma\circ f)(x) = \Sigma(x)$ for all $x\in \mathbb{R}^2$.
Therefore, 
\begin{equation}
    \left(\Sigma\circ f\right)^{-1}(1)  = \Sigma^{-1}(1) = \left\{\left.(x_1, x_2)\in \mathbb{R}^2  \right| |x_1| +|x_2| =1\right\},
\end{equation}
which is homeomorphic to a circle $S^1$.
Similarly,
\begin{equation}
    \left(\Sigma\circ f\right)^{-1}\left(\frac{1}{2}\right)  = \Sigma^{-1}\left(\frac{1}{2}\right) = \left\{\left.(x_1, x_2)\in \mathbb{R}^2  \right| |x_1| +|x_2| =\frac{1}{2}\right\},
\end{equation}
and 
\begin{equation}
    \left(\Sigma\circ f\right)^{-1}\left(\frac{3}{2}\right)  = \Sigma^{-1}\left(\frac{3}{2}\right) = \left\{\left.(x_1, x_2)\in \mathbb{R}^2  \right| |x_1| +|x_2| =\frac{3}{2}\right\},
\end{equation}
are homeomorphic to $S^1$, and
\begin{equation}
    \left(\Sigma\circ f\right)^{-1}\left( \left[\frac{1}{2}, \frac{3}{2} \right]\right)  = \Sigma^{-1}\left(\left[\frac{1}{2}, \frac{3}{2} \right]\right) = \left\{\left.(x_1, x_2)\in \mathbb{R}^2  \right| \frac{1}{2} \leq |x_1| +|x_2| \leq \frac{3}{2}\right\},
\end{equation}
is homeomorphic to a closed annulus $S^1 \times [0,1]$.
\\Define $g$ as $g:=p_{3,2}\circ G$.
Because $\|f-g\|_{\infty, [-2,2]^2} < \frac{1}{4}$, we have 
\begin{equation}
    |\Sigma \circ f -\Sigma \circ g | < \frac{1}{2}.
\end{equation}

We will apply Lemma \ref{lemma:robustlevelset} to $\Sigma \circ f$.
Because $\left(\Sigma \circ f\right)^{-1}\left(\frac{1}{2}\right) = \Sigma^{-1}(\frac{1}{2})$ and $\left(\Sigma \circ f\right)^{-1}\left(\frac{3}{2}\right)= \Sigma^{-1}(\frac{3}{2})$ are a deformation retract of $\left(\Sigma \circ f\right)^{-1}\left([\frac{1}{2}, \frac{3}{2}]\right)$, the following equation holds:

\begin{multline}
\mathrm{U}_1\left(\frac{1}{2}\right)
 = \mathrm{H}_1\left(\mathrm{B}_{0, \frac{1}{2}}\right) = \mathrm{H}_1\left(\mathrm{B}_{1, \frac{1}{2}}\right) = 
 \\ =    \mathrm{H}_1\left(\left(\Sigma \circ f\right)^{-1}\left(\left[\frac{1}{2}, \frac{3}{2}\right]\right)\right) 
    = \mathrm{H}_1\left(\Sigma ^{-1}\left( 1\right)\right) 
    = \mathbb{Z} 
\end{multline}
Thus, $\mathrm{U}_1(\frac{1}{2}) = \mathbb{Z}$.
Recall that $j_{g}: \mathrm{H}_1\left( \left(\Sigma\circ g\right)^{-1}(1)\right)\rightarrow \mathrm{U}_1(\frac{1}{2})$ is surjective. 

Because $g$ and $\Sigma$ are piecewise linear, $\left(\Sigma\circ g\right)^{-1}(1)$ consists of finite connected components $A_1,\dots, A_k$.
Then, the first homology $\mathrm{H}_1\left( \left(\Sigma\circ g\right)^{-1}(1)\right)$ is decomposed as
\begin{equation}
    \mathrm{H}_1\left( \left(\Sigma\circ g\right)^{-1}(1)\right) = \bigoplus_{i=1}^k \mathrm{H}_1\left( A_i\right),
\end{equation}
and $j_g$ can be decomposed as the sum of homomorphisms $j_{g}^i:\mathrm{H}_1\left( A_i\right)\rightarrow \mathrm{U}_1\left({\frac{1}{2}}\right)$:
\begin{equation}
    j_g(x) = \sum_{i=1}^k j_g^i(x_i),
\end{equation}
for $x = \bigoplus_{i=1}^k x_i$.
As $j_g$ is surjective, we can choose an index $i_0$ such that $j_g^{i_0}$ is a nonzero homomorphism.
Set any basepoint $x_0\in A_{i_0}$.
By Lemma \ref{lemma:Hurewicz}, there exists a surjective Hurewicz homomorphism $h_{1}:  \pi_1\left( A_{i_0}, x_0\right) \rightarrow \mathrm{H}_1\left( A_{i_0}\right)$.
We know that Hurewicz homomorphism $h_2:  \pi_1\left(  \left(\Sigma \circ f\right)^{-1}\left(\left[\frac{1}{2}, \frac{3}{2}\right]\right), x_0\right) \rightarrow \mathrm{H}_1\left( \left(\Sigma \circ f\right)^{-1}\left(\left[\frac{1}{2}, \frac{3}{2}\right]\right)\right)$ is an isomorphism.
By compositing homomorphisms as follows,
\begin{multline}
    \pi_1\left( A_{i_0}, x_0\right) \xrightarrow[]{h_1} \mathrm{H}_1\left( A_{i_0}\right)  
    \\ \xrightarrow[]{j_g^{i_0}} \mathrm{H}_1\left(\left(\Sigma \circ f\right)^{-1}\left(\left[\frac{1}{2}, \frac{3}{2}\right]\right)\right) =\mathrm{U}_1\left(\frac{1}{2}\right) \xrightarrow[]{h_2^{-1}} \pi_1\left(  \left(\Sigma \circ f\right)^{-1}\left(\left[\frac{1}{2}, \frac{3}{2}\right]\right), x_0\right),
\end{multline}
we get the nonzero homomorphism $h_2^{-1}\circ j_g^{i_0} \circ h_1$:
\begin{equation}
    h_2^{-1}\circ j_g^{i_0} \circ h_1 :\pi_1\left( A_{i_0}, x_0\right)\rightarrow \pi_1\left(  \left(\Sigma \circ f\right)^{-1}\left(\left[\frac{1}{2}, \frac{3}{2}\right]\right), x_0\right).
\end{equation}
We can observe that
\begin{equation}
    h_2^{-1}\circ j_g^{i_0} \circ h_1 = \iota_{*},
\end{equation}
where
$\iota_{*}$ is the homomorphism of the fundamental group induced by the inclusion $\iota:  A_{i_0}\hookrightarrow   \left(\Sigma \circ f\right)^{-1}\left(\left[\frac{1}{2}, \frac{3}{2}\right]\right)$.

Now, we will prove that there exists a simple closed curve $\gamma: S^1\rightarrow  \left(\Sigma \circ g\right)^{-1}(1)$, which is homotopic to the cycle $\omega_1: \theta \mapsto (\cos (2\pi \theta), \sin (2\pi \theta))$.
Because $g$ and $\Sigma$ are piecewise linear, $\left(\Sigma\circ g\right)^{-1}(1)$ can be realized by simplicial complex, and we can assume that $\pi_1\left( \left(\Sigma\circ g\right)^{-1}(1), x_0\right)$ is generated by curves consisting of finite segments, and all self-intersection points are breakpoints of curves.
Choose $\gamma_0\in \pi_1\left( \left(\Sigma\circ g\right)^{-1}(1), x_0\right) = \pi_1\left( A_{i_0}, x_0\right)$ such that $\iota_{*}\left(\left[\gamma_0 \right]\right)$ is nonzero.
We will inductively construct a closed curve $\gamma_i$ until $\gamma_i$ has no self-intersection point. 
Assume that $\gamma_i$ has self-intersection point $a \neq b$: $\gamma_{i}(a) = \gamma_{i}(b)$.
Let $\gamma_i^{+}$ be defined as $\gamma_i|_{[a,b]}$ and $\gamma_i^{-}$ be defined as $\gamma_i|_{S^1-(a,b)}$.
Then, $\gamma_i^+$ and $\gamma_i^-$ become two closed curves again with a strictly smaller number of segments than $\gamma_i$.
Because the winding number of $\gamma_i$ is equal to the sum of those of $\gamma_i^+$ and $\gamma_i^-$, at least one of $\gamma_i^+$ or $\gamma_i^-$ has a nonzero winding number, and we set $\gamma_{i+1}$ as the one with a nonzero winding number. 
Each $\gamma_i$ has a strictly smaller number of segments as $i$ increases and has a winding number not equal to zero.
Because $\gamma_0$ has finite segments, this process stops in a finite sequence.
Therefore, we can get a non-self-intersecting curve $\gamma:= \gamma_n$ with a nonzero winding number.
If $\gamma$ has a winding number of which the absolute value is larger than one, by Lemma \ref{lemma:winding_number_injective}, it has a self-intersection point.
Thus, $\gamma$ has a winding number $1$ or $-1$. 
Reverse reparametrization yields a curve with winding number one.

Because $g$ is homotopic to $f$ through the linear interpolation and $\gamma$ is homotopic to $\omega_1$, the compositions of the two functions are homotopic, which implies the same winding number between $g\circ h$ and $f\circ \omega_1$.
Therefore, the winding number of $g\circ \gamma: S^1 \rightarrow S^1 = \Sigma^{-1}(1)$ becomes two. 
Now consider $G$. Because $G$ is an embedding, it is injective.
Therefore, $G|_{\gamma(S^1)}: \gamma(I)\rightarrow S^1\times \mathbb{R}$ is injective.
As the image $G(\gamma(S^1))$ is compact, the image in $S^1 \times \mathbb{R}$ can be embedded in the annuls $\{(x_1, x_2)\in \mathbb{R} | 1- \epsilon \leq |x_1| + |x_2| \leq 1+\epsilon \}$. 
And the map $G\circ \gamma$ has winding number two.
However, by Lemma \ref{lemma:winding_number_injective}, any map with winding number two is not injective, and it becomes a contradiction. 

\end{proof}

\bibliographystyle{elsarticle-harv} 
\bibliography{reference}





\end{document}